\newtheorem{theorem}{Theorem}[section]
\newtheorem{lemma}{Lemma}
\newtheorem{definition}{Definition}
\newtheorem{corollary}[theorem]{Corollary}
\theoremstyle{remark}
\newtheorem{remark}{Remark}
\newcommand{\R}{\mathbb{R}}
\newcommand{\N}{\mathbb{N}}
\newcommand{\X}{\mathcal{X}}
\newcommand{\Y}{\mathcal{Y}}
\newcommand{\Z}{\mathcal{Z}}
\newcommand{\Sp}[1]{\left(#1\right)}
\newcommand{\Mp}[1]{\left[#1\right]}
\newcommand{\Bp}[1]{\left\{#1\right\}}
\newcommand{\LN}{\mathrm{LN}}
\newcommand{\Attn}{\mathrm{Attn}}
\newcommand{\TF}{\mathrm{TF}}
\newcommand{\Ret}{\mathrm{Ret}}
\definecolor{forestgreen}{RGB}{0,155,85}
\newcommand{\red}[1]{{\color{red}#1}}
\newcommand{\green}[1]{{\color{forestgreen}#1}}
\title{On Surjectivity of Neural Networks:\\ Can you elicit any behavior from your model?}
\author{
    Haozhe Jiang\\
    \texttt{ericjiang@berkeley.edu}\\
    \and
    Nika Haghtalab\\
    \texttt{nika@berkeley.edu}\\
}
\date{}
\begin{document}

\maketitle

\begin{abstract}
Given a trained neural network, can any specified output be generated by some input? Equivalently, does the network correspond to a function that is surjective? 
In generative models, surjectivity implies that any output, including harmful or undesirable content, can in principle be generated by the networks, raising concerns about model safety and jailbreak vulnerabilities.
In this paper, we prove that many fundamental building blocks of modern neural architectures, such as networks with pre-layer normalization and linear-attention modules, are almost always surjective. As corollaries, widely used generative frameworks, including GPT-style transformers and diffusion models with deterministic ODE solvers, admit inverse mappings for arbitrary outputs. By studying surjectivity of these modern and commonly used neural architectures, we contribute a formalism that sheds light on their unavoidable vulnerability to a broad class of adversarial attacks.

\end{abstract}

\section{Introduction}
Deep generative models have achieved remarkable success in recent years—spanning natural language processing~\citep{openai2024gpt4technicalreport,touvron2023llama2openfoundation,chowdhery2022palmscalinglanguagemodeling}, computer vision~\citep{imagenteamgoogle2024imagen3,grattafiori2024llama3herdmodels}, and robotics~\citep{kim2024openvlaopensourcevisionlanguageactionmodel,geminiroboticsteam2025geminiroboticsbringingai}. Yet this progress has raised growing safety concerns. Powerful models can be manipulated to produce undesirable or even dangerous content~\citep{zou2023universaltransferableadversarialattacks,wan2023poisoninglanguagemodelsinstruction,ma2024jailbreakingpromptattackcontrollable}, and the risk only intensifies as their capabilities expand. To mitigate these threats, considerable effort has been devoted to data curation and safety fine-tuning~\cite{grattafiori2024llama3herdmodels,openai2024gpt4technicalreport} with the aim of constraining model behavior during training. But a fundamental question remains unanswered: can we ever guarantee that a trained model will not generate harmful content? Or could it be that, given a trained generative model and an arbitrary target output, there always exists an input that produces that output? In mathematical terms, 
 \emph{viewing a generative model as a function from its input space to its output space, we ask whether that function is surjective?}

Formalizing the study of surjectivity in trained models presents significant challenges and deviates from the standard practice of the community. On the one hand, it is clearly too much to expect that \emph{every possible} choice of the parameters yields a model that is surjective. Pathological cases --- such as setting all weights to zero --- can lead to degenerate models that implement constant functions.
But merely observing that some parameter settings lead to surjectivity is equally uninformative, since there is no guarantee that the training process will uncover them.
To address this, we adopt a probabilistic perspective: rather than asking whether \emph{all} or \emph{some} parameter settings yield surjectivity, we ask whether surjectivity holds \emph{almost always}.
That is, for a fixed model architecture, do all except for a measure-zero subset of parameter configurations, lead to functions that are surjective?
This formalism better reflects the practical realities of the training process:
the evolution of parameters depends intricately on the choice of optimizer, data distribution, loss function, and even future training paradigms. These elements introduce randomness into the training process, making the final trained model effectively a draw from a high-dimensional distribution. If surjectivity holds almost everywhere, this implies that regardless of the fine-grained details of the training process, it is exceedingly unlikely that the resulting model would not be surjective.

\paragraph{Technical Results and Toolkit.}
Given the significant departure from existing paradigms of studying neural networks, our perspective also calls for a new toolkit to analyze the input-output behavior of trained models.
 We show that differential topology is the right tool for the job!
Differential topology analyzes smooth manifolds under smooth transformations and offers a lens on the global structure of neural network outputs. This connection is far from accidental---modern networks are constructed from smooth components that make them amenable to optimization via backpropagation. As a result, these models are naturally suited to analysis using tools from differential topology.
We provide a gentle introduction to some of these tools and show how they can be used to study surjectivity of neural networks with relative ease.
\begin{wraptable}{r}{-.5\linewidth}
    \centering
    \begin{tabular}{cc}
        \toprule
        Architecture&Surjectivity\\
        \midrule
        MLP with ReLU/GeLU&\red{\ding{56}}\\
        MLP with Pre-LayerNorm&\green{\ding{52}}\\
        Attention&\red{\ding{56}}\\
        Attention with Pre-LayerNorm&\green{\ding{52}}\\
        Linear Attention (Retention)&\green{\ding{52}}\\
        \bottomrule
    \end{tabular}
    \caption{Partial Summary of Results in \Cref{section:theory}. Wrapping a function $f$ in Pre-LayerNorm is defined with residual connection $f(\LN(x))+x$.
    }
    \vspace{-8pt}
    \label{table:result}
\end{wraptable}

Using these tools, we analyze the surjectivity of core building blocks in modern architectures, including LayerNorm with residual connections and attention mechanisms. In particular, we show that any continuous function wrapped with Pre-LayerNorm is surjective (Theorems~\ref{thm:preln} and \ref{thm:prelnseq})---implying that both Attention and MLP layers are surjective when wrapped in Pre-LayerNorm.
We also establish that linear attention, specifically Retention (\Cref{thm:ret}), is almost always surjective.
Two notable negative results complement these guarantees: softmax Attention itself is not almost always surjective (\Cref{thm:notsur}), and two-layer MLPs with ReLU or GeLU activation are not almost always surjective (\Cref{thm:relu-mlp-not-surj,thm:gelu-mlp-not-surj}).

In \Cref{section:safety}, we discuss the practical implications of surjectivity of these building blocks, using concrete examples in language, vision, and robotics. 
In particular, we show that Transformers, diffusion models with deterministic ODE solvers, and certain policy networks commonly used in robotics are all surjective.
Our work highlights significant obstacles to achieving provably safe architectures in these applications.

{\color{black}
\paragraph{Broader Implications of Surjectivity on Safety.}
In Section~\ref{sec:broad}, we discuss the broader implications of surjectivity on model safety and safety training.
Existing work on jailbreaks has made important progress by identifying and mitigating specific vulnerabilities. However, without a deeper understanding of whether such vulnerabilities are avoidable in principle, research on jailbreaks runs the risk of becoming a cat-and-mouse game of patching symptoms rather than addressing root causes. Our work complements these efforts by offering a more foundational perspective on jailbreaks that highlights a fundamental challenge in creating jailbreak-proof safe AI models, using surjectivity as the formalism.

From a theoretical perspective, surjectivity implies that a model is vulnerable to jailbreaks in principle. That is, every outcome, including those considered harmful by model providers, can be generated by some input. The study of surjectivity also neatly decouples risks that are rooted in an attacker's ability to elicit particular behavior --- which is the main consideration of jailbreaks --- from the
domain-specific risks that arise from having highly capable AI models in certain areas (such as bioweapons, etc.) in the first place.
Given that our results hold under no particular assumptions on the training process (other than acknowledging that elements in the optimization pipeline introduce randomness in the training process), this shows that, at least in theory, safety training on several commonly used model architectures cannot prevent the model from outputting harmful behavior.

Still, surjectivity is an existential property that does not guarantee that inputs for eliciting harmful behavior can be found with efficient computation or a feasible amount of information. We discuss these considerations further in Section~\ref{sec:broad}, highlighting how some existing attacks can be viewed through the lens of surjectivity, what the study of surjectivity adds to the discourse on complementary approaches to safety training for AI safety, and exploring future directions for work that might be of interest to the community.

}

More broadly, the surjectivity of modern architectures prompts a deeper question about the state of research in AI safety: what are the appropriate frameworks for studying safety, jailbreaking, and even copyright risks in generative models?
In all three cases, current evaluations often rely on probing the model's output behavior to examine whether certain harmful behavior or output resembling proprietary information can be elicited through adversarial inputs.
But given that  surjectivity implies that any output can, in principle, be elicited from any model, our work suggests that caution is needed when drawing conclusions about model safety based solely on its output behavior --- especially when inputs can be manipulated outside typical usage patterns.

\subsection{Related works.}

\paragraph{Invertible Architectures.} Invertibility, the property of being both injective and surjective, is a stronger notion than surjectivity which has been studied in neural networks before. \citet{rezende2016variationalinferencenormalizingflows} proposed normalizing flow, which uses invertible functions to model complex distributions. A line of work constructing invertible neural networks thus follows~\citep{dinh2015nicenonlinearindependentcomponents,dinh2017densityestimationusingreal,kingma2017improvingvariationalinferenceinverse,papamakarios2018maskedautoregressiveflowdensity,kingma2018glowgenerativeflowinvertible,durkan2019neuralsplineflows,chen2020residualflowsinvertiblegenerative}. Beyond density estimation, researchers also explored invertible networks motivated by memory savings and representational power~\citep{gomez2017reversibleresidualnetworkbackpropagation,jacobsen2018irevnetdeepinvertiblenetworks,behrmann2019invertibleresidualnetworks,song2019mintnetbuildinginvertibleneural}. 
Specifically, in sequence models invertible architectures have been proposed to save memory~\citep{mackay2018reversiblerecurrentneuralnetworks,kitaev2020reformerefficienttransformer,mangalam2023reversiblevisiontransformers}. 
A key difference to our work is that while prior efforts aimed to modify architectures to ensure invertibility, the modern architectural blocks we study were not designed with invertibility in mind.

\paragraph{Safety.}
Attacks on generative models that lead to safety violations have been extensively studied in prior works.

We start with Language Models. There is a long line of work studying jailbreaks, which means constructing prompt to elicit undesirable behaviors from a trained language model. Jailbreaks can be classified as black-box attacks and white-box attacks. Black-box attacks restrict the attacker's access such that only prompt inputs are allowed, and we do not have knowledge about the model's internal parameters or architecture. We list some methods as follows. Goal-hijacking guide the model to override intention of the original prompt, and follow the attacker's wish by adding additional prompt to the original prompt~\citep{perez2022ignorepreviouspromptattack,liu2024promptinjectionattackllmintegrated}. Another similar method suppresses the model from refusing to answer harmful questions~\citep{wei2023jailbrokendoesllmsafety}. Few-shot jailbreaks manipulate the model by showing it demonstrations of harmful responses~\citep{rao-etal-2024-tricking,wei2024jailbreakguardalignedlanguage,li-etal-2023-multi-step}. Code jailbreaks take advantage of model's power of code comprehension to conceal malicious contents in codes~\citep{kang,liu2024boostingjailbreaktransferabilitylarge}. A line of work tells the language model to role play in a fictional world to let it generate harmful outputs~\citep{liu2024jailbreakingchatgptpromptengineering,deshpande-etal-2023-toxicity,shah2023scalabletransferableblackboxjailbreaks,kang,xu-etal-2024-cognitive}. Some attacks exploit lack of alignment data in low-resource languages to achieve jailbreak~\citep{yong2024lowresourcelanguagesjailbreakgpt4,deng2024multilingualjailbreakchallengeslarge,xu-etal-2024-cognitive}. A similar approach jailbreaks models by communicating in a ciphered texts~\citep{wei2023jailbrokendoesllmsafety,yuan2024gpt4smartsafestealthy,handa2025competencyreasoningopensdoor}. In contrast to black-box approaches, white box approaches allow us to access the whole model (open-source models). The seminal work of Greedy Coordinate Gradient~\citep{zou2023universaltransferableadversarialattacks} use gradient-based method to optimize a suffix in the embedding space to maximize the likelihood of harmful output. Subsequent works following this path and improve the success rate by better optimization strategies\citep{zhang2025boostingjailbreakattackmomentum,hu2025efficientllmjailbreakadaptive,jia2024improvedtechniquesoptimizationbasedjailbreaking,liu2024boostingjailbreaktransferabilitylarge,huang2024semanticguidedpromptorganizationuniversal,geisler2025attackinglargelanguagemodels,huang-etal-2025-stronger,sitawarin2024palproxyguidedblackboxattack,wang2025julijailbreaklargelanguage}. Notably, some works attacks the model through manipulating hidden embedding~\citep{wang2024attngcgenhancingjailbreakingattacks,hu2024drojpromptdrivenattacklarge}. Allowing access to model weights also allow us to finetune the model from safe to unsafe ones~\cite{wan2023poisoninglanguagemodelsinstruction,rando2024universaljailbreakbackdoorspoisoned,liao2024amplegcglearninguniversaltransferable,kumar2024amplegcgplusstronggenerativemodel,paulus2024advprompterfastadaptiveadversarial}. Other safety concerns of generative language models include bias, privacy, misuse, agent safety, and so on. We refer interested readers to \citet{shi2024largelanguagemodelsafety} for a more comprehensive survey.

Vision models has also become very powerful in recent years, and safety concerns rises. When saying vision models we usually refer to vision-language models (VLMs), because most useful vision models nowadays include both modalities. Let us first talk about white-box attacks. One line of works exploits the vision module by constructing adversarial images to let the model output undesirable images or texts~\cite{qi2023visualadversarialexamplesjailbreak,schlarmann2023adversarialrobustnessmultimodalfoundation,bailey2024imagehijacksadversarialimages,madry2019deeplearningmodelsresistant,luo2024imageworth1000lies,shumailov2021spongeexamplesenergylatencyattacks,chen2022nicgslowdownevaluatingefficiencyrobustness}. Another line of works attack the model by exploiting both modalities~\citep{wang2024whiteboxmultimodaljailbreakslarge,li2025imagesachillesheelalignment,luo2024imageworth1000lies,ying2024jailbreakvisionlanguagemodels}. For VLMs, there is a special class of attacks called grey-box attacks. These methods leverage the fact that a lot of vision encoders are CLIP~\citep{radford2021learningtransferablevisualmodels} or BLIP~\citep{li2022blipbootstrappinglanguageimagepretraining} to create better attacks. Like white-box attacks, there are also single-modality~\citep{zhao2023evaluatingadversarialrobustnesslarge,dong2023robustgooglesbardadversarial,niu2024jailbreakingattackmultimodallarge} and cross-modality~\citep{shayegani2023jailbreakpiecescompositionaladversarial} attack methods. Last we introduce black-box methods. One line of works attack models by constructing malicious typography~\citep{gong2025figstepjailbreakinglargevisionlanguage,qraitem2025visionllmsfoolselfgeneratedtypographic,wang2024jailbreaklargevisionlanguagemodels,teng2025heuristicinducedmultimodalriskdistribution}. In these methods malicious information is embedded into pictures that could have been rejected through text input. Other attacks include using visual role play~\citep{ma2024visualroleplayuniversaljailbreakattack}, exploiting visual understanding capabilities~\citep{zou2024imagetotextlogicjailbreakimagination} and so on. We refer interested readers to a more comprehensive survey paper by \citet{ye2025surveysafetylargevisionlanguage}.

Attacks in robotics is not studied as extensively as language and vision generative models. Common attacks to visual inputs include gradient-based pixel-level attacks~\citep{du2021physicaladversarialattacksaerial,goodfellow2015explainingharnessingadversarialexamples} and patch-based attacks that can be realized in physical world~\citep{athalye2018synthesizingrobustadversarialexamples,xu2020adversarialtshirtevadingperson}. There are also recent works on attacking vision-language-action models~\citep{wang2025exploringadversarialvulnerabilitiesvisionlanguageaction}.

\section{Notation and Preliminaries}\label{section:prelim}
We denote individual vectors by letters $x,y$, and sequence vectors using $a,b,c$. When using subscript, $x_i,y_i$ mean the $i$-th entries, while $a_i,b_i,c_i$ mean their $i$-th elements. Symbol $\|\cdot\|$ represents the 2-norm of a vector or matrix. Symbol $\odot$ indicates entry-wise multiplication between vectors or matrices. Symbol $\oplus$ represents the direct sum (Cartesian Product) of linear spaces and we use it to define input spaces for sequences. 
For a positive integer $n\in\N^+$, we define $[n]=\{1,\cdots,n\}$. The identity matrix of dimension $d$ is represented by $I^d$.
For a set $\Omega\subset\R^d$, let $\partial\Omega$ be its boundary and $\overline{\Omega}$ be its closure. For a function $f:\R^d\to\R^d$ and a set $S\subset\R^d$, we denote $f(S)=\{f(x)|x\in S\}$. \emph{For most of this paper, we only discuss networks with the same input and output dimensions.}
\subsection{Neural Networks}
Next, we give an overview of common modern neural network building blocks. Let the input of the network be a vector $x\in\R^d$ where $d\in\N$ is the input dimension. The output $y\in\R^d$ is also a vector. One of the most elementary architectures is the Multi-Layer Perceptron.
\begin{definition}\label{def:mlp}
    An \emph{$m$-layer Multi-Layer Perceptron (MLP)} is a function $f:\R^d\to\R^d$ defined as
    \begin{align*}
        f(x)=\sigma_m(W_m\cdots\sigma_2(W_2\sigma_1(W_1x+\lambda_1)+\lambda_2)\cdots+\lambda_m)
    \end{align*}
    where $\{W_i\}_{i\in[m]}$ are trainable matrices, $\{\lambda_i\}_{i\in[m]}$ are trainable vectors called \emph{bias terms}, and $\{\sigma_i\}_{i\in[m]}$ are nonlinear entry-wise functions called \emph{activation functions}. Row dimensions of $W_1,\cdots,W_{m-1}$ are called \emph{hidden dimensions}.
\end{definition}
We define common examples of activation functions, namely ReLU, and GeLU~\citep{hendrycks2023gaussianerrorlinearunits} below:
\begin{align*}
    \mathrm{ReLU}(x)_i=\max\{x_i,0\},\mathrm{GeLU}(x)_i=x_i\cdot\frac12\Mp{1+\mathrm{erf}\Sp{\frac{x_i}{\sqrt2}}}.
\end{align*}
Here the subscript $i$ means the $i$-th entry of a vector, and $\mathrm{erf}(\cdot)$ is the Gaussian error function.

Residual connection~\citep{he2015deepresiduallearningimage} and layer normalization~\citep{ba2016layernormalization} are essential elements of modern deep neural networks that usually work together.
\begin{definition}\label{def:layernorm}
    \emph{Layer Normalization}\footnote{In real-world implementation a small $\varepsilon$ is added to the denominator in LN. Here we omit it for presentation simplicity and it does not change the proofs in this paper.} is a function $\LN:\R^d\to\R^d$ defined as
    \begin{align*}
        \LN(x)=\gamma\odot\frac{x-\overline{x}}{\|x-\overline{x}\|/\sqrt{d}}+\beta,\quad \overline{x}=\frac1d\sum_{i=1}^dx_i
    \end{align*}
    where $\gamma, \beta\in\R^d$ are trainable parameters. When used with residual connections, there are two variants called \emph{Pre-LayerNorm} and \emph{Post-LayerNorm}. When wrapped around a function $f:\R^d\to\R^d$, residual connection, Pre-LayerNorm and Post-LayerNorm are respectively defined as
    \begin{align*}
        g(x)=f(x)+x,\quad g(x)=f(\LN(x))+x,\quad g(x)=\LN(f(x)+x).
    \end{align*}
\end{definition}
Pre-LayerNorm has become common practice in modern neural networks as it stabilizes training~\citep{xiong2020layernormalizationtransformerarchitecture,openai2024gpt4technicalreport,touvron2023llama2openfoundation,chowdhery2022palmscalinglanguagemodeling}. Post-LayerNorm is also used sometimes~\citep{zhuo2025hybridnormstableefficienttransformer,li2024mixlnunleashingpowerdeeper,vaswani2023attentionneed}.

\paragraph{Sequence Models.} There are also specialized architectures dealing with sequential data, which take in an input sequence $a_1,\cdots,a_n$ and output another sequence $b_1,\cdots,b_n$. Attention~\citep{bahdanau2016neuralmachinetranslationjointly,vaswani2023attentionneed} is one of the most widely used sequence models.

\begin{definition}\label{def:attention}
    For trainable key, query, and value matrices $K,Q,V\in\R^{d\times d}$, a \emph{causally-masked attention layer} calculates outputs as\footnote{In practice, a scaling factor is inside the exp function~\citep{vaswani2023attentionneed}, which we omit, as it does not affect our analysis.}
    \begin{align*}
        b_i=\Attn(a)_i=\frac{1}{Z_i}\sum_{j=1}^i\exp(a_j^\top K^\top Qa_i)Va_j,\quad Z_i=\sum_{j=1}^i\exp(a_j^\top K^\top Qa_i).
    \end{align*}
\end{definition}
Causally-masked attention layers are usually used for autoregressive generation. Specifically, given input $a_1,\cdots,a_n$, the next token is generated from decoding $b_n$. After that this token is appended to the input and subsequent tokens are iteratively generated in the same way. There are also variants to $\Attn$ called linear attentions~\citep{yang2024gatedlinearattentiontransformers}. The simplest linear attention is RetNet~\citep{sun2023retentivenetworksuccessortransformer}.
\begin{definition}\label{def:retnet}
    For trainable parameters $K,Q,V\in\R^{d\times d}$, a \emph{Retention layer} calculates output as
    \begin{align}\label{eq:retnet}
        b_i=\Ret(a)_i=\sum_{j=1}^i\Sp{a_j^\top K^\top Qa_i}Va_j=S_iQa_i,\text{ where } S_i=S_{i-1}+Va_ia_i^\top K^\top.
    \end{align}
\end{definition}
In other words, $\Ret$ can be thought of as $\Attn$ without the non-linearity introduced by the soft-max function through $\exp$ and $Z_i$.
It admits a recurrent form as shown in \Cref{eq:retnet}, so autoregressive generation becomes faster. Other variants keep the recurrent form and use more complicated update rules for better performance. To keep the presentation clean, we defer the discussions on multi-head attention to the Appendix.

\paragraph{Transformer.} The Transformer architecture is widely used in many applications recently. Most of them can be expressed by the building blocks stated previously. To illustrate this, we take GPT-3 (\cite{brown2020languagemodelsfewshotlearners} referred to as GPT below) as an example here. A single block in GPT can be expressed as
\begin{align*}
    b_i=\TF(a)_i=W_2\mathrm{GeLU}(W_1\LN\Sp{c_i}+\lambda_1)+\lambda_2+c_i,  \text{ where }c_i=\Attn(\LN(a))_i+a_i.
\end{align*}

 Matrices $W_1,W_2^\top\in\R^{d\times d'}$. Besides, $\LN$ applying to a sequence means applying separately to each input vector, i.e. the layer norm of the $i$-th vector is $\LN(a)_i=\LN(a_i)$. In plain text, $\TF$ is an $\Attn$ followed by a two-layer MLP, each wrapped with Pre-LayerNorm, and GPT is a composition of several $\TF$s.

\subsection{Differential Topology}\label{section:topology}
Differential topology studies the properties of smooth manifolds that are invariant under smooth transformations. Since almost all neural networks are trained using back-propagation, the architectures are usually smooth.\footnote{Some building blocks, such as ReLU are not smooth everywhere. However they are only not smooth on zero measure sets, otherwise we cannot obtain gradient for a substantial amount of inputs. Topology can still tackle this scenario because such functions can always be approximated by smooth functions. We refer interested readers to standard references~\citep{hirsch1976differential} and omit this subtlety for simplicity.} Differential topology hence provides natural tools for us to prove surjectivity of neural networks.  In this section, we go over the necessary mathematical concepts and results that will be used later. We restrict our scope to smooth maps $f,g:\R^d\to\R^d$ here unless otherwise specified

One of the early triumphs of topology is the Brouwer's fixed point theorem.
\begin{theorem}[Brouwer's Fixed Point Theorem, {\cite[p.~117]{munkres1984elements}}]\label{thm:brouwer}
    Let $B^d(R)=\Bp{x\in\R^d\middle|\|x\|\leq R}$ be a $d$-dimensional ball with radius $R$. For every continuous function $f:B^d(R)\to B^d(R)$, there exists $x\in B^d(R)$ such that $f(x)=x$.
\end{theorem}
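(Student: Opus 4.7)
The plan is to prove the contrapositive via the classical no-retraction argument, which is the cleanest avenue for a differential-topology exposition. Suppose for contradiction that $f : B^d(R) \to B^d(R)$ is continuous and has no fixed point, so that $f(x) \neq x$ for every $x \in B^d(R)$. Define a map $r : B^d(R) \to \partial B^d(R)$ geometrically by shooting the ray that starts at $f(x)$ and passes through $x$, and letting $r(x)$ be the (unique) point at which this ray exits the ball. Since $f(x) \neq x$ everywhere the ray direction is well-defined and depends continuously on $x$, so $r$ is continuous; and if $x \in \partial B^d(R)$ then the ray exits at $x$ itself, so $r$ restricts to the identity on the boundary. Thus $r$ is a continuous retraction of $B^d(R)$ onto its boundary sphere $S^{d-1}_R$.

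Next I would reduce to the smooth case, because only then can the tools from \Cref{section:topology} be brought to bear. By the Stone--Weierstrass theorem I can approximate $r$ uniformly on $B^d(R)$ by a smooth map $\tilde r$; composing with the nearest-point projection onto $S^{d-1}_R$ (which is smooth in a neighborhood of the sphere, and the approximation can be taken close enough to land in that neighborhood) and smoothing the behaviour near the boundary yields a smooth retraction $\rho : B^d(R) \to S^{d-1}_R$ with $\rho|_{\partial B^d(R)} = \mathrm{id}$. This is the standard smoothing trick; it is technical but routine, and it is really the only subtle point.

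Finally, I would derive a contradiction from the existence of a smooth retraction $\rho$. By Sard's theorem pick a regular value $y \in S^{d-1}_R$ of $\rho$. Then $\rho^{-1}(y)$ is a compact $1$-dimensional smooth submanifold of $B^d(R)$ with boundary $\rho^{-1}(y) \cap \partial B^d(R)$. Since $\rho$ is the identity on the boundary, this boundary set is exactly the single point $\{y\}$. But the classification of compact $1$-manifolds with boundary says that such a manifold is a disjoint union of circles and closed intervals, hence has an even number of boundary points. The parity $1 \not\equiv 0 \pmod{2}$ gives the contradiction.

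The main obstacle, as noted, is the reduction from a merely continuous retraction to a smooth one: the geometric construction only yields continuity, while the differential-topology machinery (Sard, the $1$-manifold classification, or alternatively a Stokes' theorem argument using the pullback of the volume form on $S^{d-1}_R$) requires smoothness. Everything else is either elementary geometry or a direct application of standard differential-topology facts.
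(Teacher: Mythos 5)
The paper states Brouwer's fixed point theorem as classical background (it is used as a black box in the proof of Theorem~\ref{thm:preln}) and gives no proof of its own, so there is nothing to match your argument against; on its own terms, your proof is the standard no-retraction argument of Hirsch/Milnor and it is correct. The ray construction does give a continuous retraction $r$ with $r|_{\partial B^d(R)}=\mathrm{id}$, and the Sard-plus-classification-of-compact-$1$-manifolds parity argument is airtight: since $\rho|_{\partial B^d(R)}$ is the identity, every $y\in S^{d-1}_R$ is automatically a regular value of the boundary restriction, so $\rho^{-1}(y)$ is a neat compact $1$-manifold whose boundary is the single point $\{y\}$, contradicting evenness. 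The one place you rightly flag as delicate --- upgrading the continuous retraction to a smooth one while preserving the exact boundary identity --- is also where your route is slightly more awkward than necessary: uniformly approximating $r$ and then forcing $\rho|_{\partial B^d(R)}=\mathrm{id}$ requires some care (e.g.\ a collar/interpolation argument near the sphere). A cleaner standard shortcut is to smooth $f$ first: if $f$ has no fixed point then $\inf_x\|f(x)-x\|=\delta>0$ by compactness, so any smooth $g$ with $\sup_x\|g(x)-f(x)\|<\delta/2$ (rescaled by $1-\varepsilon$ to land in the ball) is also fixed-point free, and the ray construction applied to $g$ is then smooth outright. Either way the argument closes; this is just a matter of which technical lemma you prefer to discharge.
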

This theorem can be generalized from a ball $B^d(R)$ to any convex closed bounded set. {Most celebrated and common applications of Brouwer's fixed point theorem are in game theory, Economics, and the study of equilibria of dynamical systems.}

\begin{definition}
    \emph{Differential} $Df:\R^d\to\R^{d\times d}$ is defined as $\displaystyle Df(x)_{ij}=\partial f_i(x)/\partial x_j.$
\end{definition}
Since $f(x)$ can be approximated linearly in a small neighborhood around $x$, $Df(x)$ describes the local behavior of $f$ around $x$. More specifically $Df(x)_{ij}$ describes the rate the $i$-th dimension of output changes with regard to the $j$-th dimension of input.
Hence an invertible $Df(x)$ (equivalently one with $\det Df(x)\neq0$) indicates that $f$ behaves well around $x$
in the sense that no small neighborhood containing $x$ is collapsed to be lower dimensional after the application of function $f$. The next theorem, known as the Inverse Function Theorem, formalizes this intuition.
\begin{theorem}[Inverse Function Theorem]\label{thm:inverse}
    Let $x\in\R^d$ satisfy $\det Df(x)\neq0$, then there exist open sets $U\ni x, V\ni f(x)$, such that $f$ is bijective between $U$ and $V$.
\end{theorem}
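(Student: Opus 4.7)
The plan is to prove this classical result via the Banach contraction mapping principle. The key observation is that solving $f(y) = z$ locally is equivalent to finding a fixed point of an auxiliary map which, thanks to invertibility of $Df(x)$, will be a strict contraction on a small ball.

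First I would normalize. By pre- and post-composing $f$ with the affine bijections $y \mapsto y + x$ on the domain and $w \mapsto Df(x)^{-1}(w - f(x))$ on the codomain, both of which are themselves smooth bijections between open sets, we may assume without loss of generality that $x = 0$, $f(0) = 0$, and $Df(0) = I^d$. Define the auxiliary map $g(y) = y - f(y)$, so $g(0) = 0$ and $Dg(0) = 0$. By continuity of $Dg$, choose $r > 0$ small enough that $\|Dg(y)\| \leq 1/2$ whenever $\|y\| \leq r$. The mean value inequality then yields $\|g(y) - g(y')\| \leq \tfrac12 \|y - y'\|$ on this closed ball and, combined with $g(0) = 0$, gives $\|g(y)\| \leq r/2$ there.

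Next, for each target $z$ with $\|z\| < r/2$, consider $T_z(y) = z + g(y)$; a fixed point of $T_z$ is precisely a preimage of $z$ under $f$. The estimate $\|T_z(y)\| \leq \|z\| + \|g(y)\| < r/2 + r/2 = r$ shows $T_z$ maps the closed ball $\{y : \|y\| \leq r\}$ into itself, and the Lipschitz bound of $1/2$ makes it a strict contraction on this complete metric space. Banach's fixed point theorem then supplies a unique $h(z)$ with $f(h(z)) = z$. Taking $V = \{z \in \R^d : \|z\| < r/2\}$ and $U = f^{-1}(V) \cap \{y \in \R^d : \|y\| < r\}$ (where $f^{-1}(V)$ denotes the preimage set), the set $U$ is open by continuity of $f$, and $f$ surjects $U$ onto $V$.

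The remaining step is injectivity on $U$, which is largely bookkeeping but is the piece that pins down bijectivity rather than mere surjectivity. Using the triangle inequality and the Lipschitz bound on $g$,
\begin{equation*}
\|y - y'\| \leq \|f(y) - f(y')\| + \|g(y) - g(y')\| \leq \|f(y) - f(y')\| + \tfrac12\|y - y'\|,
\end{equation*}
so $\|y - y'\| \leq 2\|f(y) - f(y')\|$ for $y, y'$ with norm at most $r$, ruling out distinct preimages of the same point. The main subtlety throughout is maintaining the factor-of-two gap between the radius of the domain ball ($r$) and the radius of the range ball ($r/2$); this gap is precisely what the $1/2$ Lipschitz bound on $g$ is engineered to deliver, and it is the only quantitative ingredient of the argument.
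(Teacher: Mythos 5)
The paper states the Inverse Function Theorem in its preliminaries on differential topology as a classical background result and does not supply a proof of its own (unlike, say, \Cref{thm:preln} or \Cref{thm:mlp}, which are the paper's original contributions). So there is no in-paper argument to compare against. Your proof via the Banach contraction mapping principle is the standard textbook argument, and it is correct: the normalization to $x=0$, $f(0)=0$, $Df(0)=I^d$ is legitimate since it only involves composing with affine bijections; the map $g(y)=y-f(y)$ has $Dg(0)=0$, and continuity of $Dg$ (which holds because the paper restricts attention to smooth $f$) gives the uniform bound $\|Dg\|\leq\tfrac12$ on a small ball; the fixed point of $T_z(y)=z+g(y)$ is exactly a preimage of $z$; the self-mapping estimate $\|T_z(y)\|\leq\|z\|+\|g(y)\|<r$ and the $\tfrac12$-Lipschitz bound make Banach's theorem applicable; and the final estimate $\|y-y'\|\leq 2\|f(y)-f(y')\|$ gives injectivity on the ball of radius $r$. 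Taking $U=f^{-1}(V)\cap B^d(r)$ and $V=B^d(r/2)$ yields the required open neighborhoods of $x$ and $f(x)$ between which $f$ is bijective. One could add a sentence justifying the mean value inequality in $\R^d$ (e.g., integrating $Dg$ along the segment from $y'$ to $y$), but this is routine. Everything checks out.
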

One of the key concepts in algebraic topology is homotopy. {Roughly speaking, two {continuous} functions are homotopic if one can be continuously deformed to the other.}
\begin{definition}
    \emph{Homotopy} is a function class $\Bp{f_t:\R^d\to \R^d\middle |t\in[0,1]}$ such that the associated $F:\R^d\times[0,1]\to\R^d$, defined by $F(x,t)=f_t(x)$, is continuous. For two functions $f,g:\R^d\to \R^d$, they are \emph{homotopic} if there exists a homotopy such that $f_0=f,f_1=g$.
\end{definition}

Now we are ready to introduce Brouwer degree, a generalization of the idea presented in \Cref{thm:brouwer}. Degree theory is another powerful tool to prove surjectivity.
\begin{definition}\label{def:degree}
    (\cite[Definition 1.2.4]{DincaMawhin2021}) Let $\Omega\subset\R^d$ be an open bounded set {and $\overline{\Omega}$ be its closure. Let} $\overline{f}:\overline{\Omega}\to\R^d$ be the restriction of $f$ on $\overline{\Omega}$.
    {Then for any value $y\notin f(\partial \Omega)$ (i.e, any $y$ to which no $x$ on the boundary of $\Omega$ maps),}
    the Brouwer degree is defined by
    \begin{align}\label{eq:degree}
        \deg(f,\Omega,y)=\sum_{x\in {\overline{f}}^{-1}(y)}\mathrm{sgn}\det(Df(x)).
    \end{align}
\end{definition}
\begin{lemma}\label{lem:degree}
    (\cite[Theorem 1.2.2]{DincaMawhin2021}) If $f,g$ are homotopic, {$\Omega$ is an open bounded set}, and $v\notin F(\partial\Omega, t)$ for all $t\in[0,1]$, we have $\deg(f,\Omega,v)=\deg(g,\Omega,v)$.
\end{lemma}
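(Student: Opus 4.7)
The plan is to establish homotopy invariance via the classical cobordism argument, treating the preimage of $v$ under the homotopy as a compact $1$-manifold whose endpoints account for the difference in degrees.

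First, I would reduce to a generic situation. By Sard's theorem applied to the smooth map $F:\overline{\Omega}\times[0,1]\to\R^d$, the set of regular values of $F$ is dense; a regular value of $F$ is automatically a regular value of $f=f_0$ and $g=f_1$. The assumption $v\notin F(\partial\Omega,t)$ for all $t\in[0,1]$ is an open condition (since $F(\partial\Omega\times[0,1])$ is compact), so we may pick a regular value $v'$ of $F$ arbitrarily close to $v$ while preserving this boundary-avoidance. A separate continuity argument (using \Cref{thm:inverse} to control how preimages move under small perturbations of $v$) then shows $\deg(f,\Omega,v')=\deg(f,\Omega,v)$ and similarly for $g$, reducing the problem to the case where $v$ itself is a regular value of $F$.

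Second, with $v$ a regular value of $F$, the preimage $M=F^{-1}(v)\cap(\overline{\Omega}\times[0,1])$ is a smooth $1$-dimensional submanifold with boundary. The boundary-avoidance assumption guarantees $M\cap(\partial\Omega\times[0,1])=\emptyset$, so $\partial M\subset\Omega\times\{0,1\}$, and $M$ is compact because $\overline{\Omega}\times[0,1]$ is. I would then invoke the classification of compact $1$-manifolds with boundary: $M$ is a finite disjoint union of circles and arcs, where each arc has exactly two endpoints, both lying in $\Omega\times\{0\}\cup\Omega\times\{1\}$.

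Third, I would tie the endpoint combinatorics to signs. Orient $M$ using the standard orientation conventions inherited from $\overline{\Omega}\times[0,1]$ and from the fixed orientation of $\R^d$. For an arc component, a direct computation with $Df_t$ at each endpoint (using the implicit function theorem along the arc) shows that the two endpoints contribute to the respective sums in \Cref{eq:degree} with signs determined by whether the endpoint lies in the bottom slice $t=0$ or the top slice $t=1$, and the combination of orientations forces the following cancellation rule: each arc contributes $0$ to $\deg(f_1,\Omega,v)-\deg(f_0,\Omega,v)$, because the two endpoint contributions either come with opposite signs when one endpoint is in each slice, or with canceling signs when both endpoints lie in the same slice. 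Circle components have no endpoints and contribute nothing. Summing over all components yields $\deg(f_0,\Omega,v)=\deg(f_1,\Omega,v)$.

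The main obstacle is the sign bookkeeping in the third step: one must carefully track how the orientation of the $1$-manifold $M$, the outward-pointing vector at a boundary endpoint, and the sign of $\det Df_t$ at that endpoint all combine, and verify that the combinatorics of arc endpoints really does yield cancellation in the two cases (same slice vs.\ different slices). Once this local orientation lemma is in hand, the global statement follows immediately, and the non-smooth or non-regular cases are handled by the approximation argument sketched above.
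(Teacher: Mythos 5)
The paper does not actually prove \Cref{lem:degree}; it is quoted as a black box from \citet{DincaMawhin2021}. Your proposal is therefore supplying the standard Milnor/Guillemin--Pollack cobordism proof of homotopy invariance of the Brouwer degree, and the skeleton --- Sard's theorem to reduce to regular values, the classification of compact $1$-manifolds with boundary applied to $F^{-1}(v)$, and endpoint sign bookkeeping --- is the right one and consistent with how the cited reference (or any textbook) establishes the result.

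There is, however, one concretely false step in your reduction. You assert that a regular value of $F$ is automatically a regular value of $f_0$ and $f_1$. It is not: regularity of $v$ for $F$ means the $d\times(d+1)$ Jacobian of $F$ has rank $d$ on $F^{-1}(v)$, which does not force the $d\times d$ matrix $Df_t(x)$ to be invertible at preimage points with $t\in\{0,1\}$. For $d=1$, take $F(x,t)=x^2-t$: the value $0$ is regular for $F$ (its differential $(2x,-1)$ never vanishes), yet $0$ is a critical value of $f_0(x)=x^2$, and the preimage $\{x^2=t\}$ meets the slice $t=0$ tangentially, so it is not a neat $1$-manifold with boundary in $\overline{\Omega}\times[0,1]$ and your steps two and three collapse (the touching point is not an endpoint, and \Cref{eq:degree} is not even well-behaved at $f_0$ there). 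The fix is standard but must be stated: apply Sard's theorem separately to $F$, $f_0$, and $f_1$ (the union of the three critical-value sets is still null) and pick $v'$ regular for all three. A second, smaller gap: the paper's homotopies are only assumed continuous, while your argument needs $F$ smooth; you should first replace $F$ by a smooth approximation that still misses $v$ on $\partial\Omega\times[0,1]$, which is possible because $\mathrm{dist}\Sp{v,F(\partial\Omega\times[0,1])}>0$ by compactness, and check that the degrees of $f_0,f_1$ computed via \Cref{eq:degree} survive this approximation. With these two repairs, plus the local orientation lemma you explicitly defer, the argument closes.
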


    This property shows that Brouwer degree is homotopy invariant.
    This allows us to reduce the problem of calculating the degree of a complex function to that of a simpler one.
    Since nonzero degree implies that at least one term exists on the right hand side of \Cref{eq:degree}, it also implies the existence of a pre-image, which helps us prove surjectivity of complex functions.

\section{Surjectivity of Architectural Blocks in Modern Neural Networks}\label{section:theory}
In this section, we analyze the surjectivity of fundamental building blocks in modern neural networks using tools from differential topology. Before diving into the details, we formalize the setting. Though elementary, we start with the formal definition of surjectivity.
\begin{definition}
    A function $f:\X\to\Y$ is \emph{surjective}, if for any $y\in\Y$, there exists a {pre-image}, i.e., an $x\in\X$ such that $f(x)=y$.
\end{definition}
Surjectivity is closed under composition. Namely, if $f:\X\to\Y,g:\Y\to\Z$ are surjective, their composition $g\circ f$ is also surjective. This allows us to separately prove surjectivity of building blocks of neural networks and conclude that the whole network is surjective. It is often too good to hope that an architecture is always surjective with any parameter. For example, if we set $V=0$ in attention (\Cref{def:attention}), this layer will output nothing but zero. A less extreme example is when $V$ is not an invertible matrix, it is not possible to output a vector outside the subspace spanned by $V$'s column space. However, this almost never happens in practice because the set of non-invertible matrices takes up zero volume in the parameter space, and hence is almost never hit in trained models.
\begin{definition}\label{def:almost-always-surj}
    Let $\mathcal{H}_\Theta=\{f_\theta:\X\to\Y|\theta\in\Theta\}$ be a class of neural networks parameterized by $\theta\in\Theta$, where $\Theta$ is a subset of Euclidean space with Lebesgue measure and $\Y\subset\R^d$ has Lebesgue measure. $\mathcal{H}_\Theta$ is an \emph{almost always surjective} set if, except for $\theta$ in a zero-measure subset of $\Theta$, $f_\theta^{-1}(y)$ is nonempty for every $y\in\Y$ except for a zero-measure subset of $\Y$.
\end{definition}
Below we analyze which architectures are almost always surjective and in \Cref{section:safety} we discuss which real-world models are surjective when used in specific ways.

\subsection{Pre-LayerNorm}\label{sec:preln}
In this section, we prove that Pre-LayerNorm is surjective using a neat but nontrivial application of the Brouwer's fixed point theorem (\Cref{thm:brouwer}).
\begin{theorem}\label{thm:preln}
    Let $f:\R^d\to\R^d$ be a continuous function, then $g:\R^d\to\R^d$ defined by $g:x\mapsto f(\LN(x))+x$ is surjective.
\end{theorem}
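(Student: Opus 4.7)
The plan is to reduce surjectivity of $g$ to the existence of a fixed point and apply Brouwer's fixed point theorem (\Cref{thm:brouwer}). Fix an arbitrary target $y\in\R^d$. The equation $g(x)=y$ can be rewritten as $x=y-f(\LN(x))$, so $y$ has a pre-image under $g$ if and only if the continuous map
\begin{equation*}
    T_y:\R^d\to\R^d,\qquad T_y(x)=y-f(\LN(x))
\end{equation*}
has a fixed point.

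The key observation is that $\LN$ takes values in a compact set. Indeed, from \Cref{def:layernorm}, the vector $(x-\overline{x})/\|x-\overline{x}\|$ lies on the unit sphere whenever it is defined, so $\LN(x)$ lies in the compact set $S=\{\gamma\odot u+\beta:\|u\|\leq 1\}$ (the degenerate case $x=\overline{x}\cdot\mathbf{1}$ can be handled by defining $\LN$ there to be any fixed point in $S$, or simply by passing to a limit). Since $f$ is continuous, $f(S)$ is compact and hence bounded; let $M=\sup_{s\in S}\|f(s)\|<\infty$. Then for every $x\in\R^d$,
\begin{equation*}
    \|T_y(x)\|\leq\|y\|+\|f(\LN(x))\|\leq\|y\|+M.
\end{equation*}

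Set $R=\|y\|+M$ and let $B^d(R)$ be the closed ball of radius $R$. The inequality above shows $T_y(\R^d)\subseteq B^d(R)$, so in particular the restriction $T_y\restriction_{B^d(R)}$ is a continuous self-map of $B^d(R)$. By Brouwer's fixed point theorem there exists $x^\star\in B^d(R)$ with $T_y(x^\star)=x^\star$, equivalently $g(x^\star)=y$. Since $y$ was arbitrary, $g$ is surjective.

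The only mildly subtle step is the bounded-image observation for $\LN$ and the harmless handling of the measure-zero singular locus $\{x:x=\overline{x}\cdot\mathbf{1}\}$; once those are in place, everything else is a direct application of Brouwer. No assumption on $f$ beyond continuity and no assumption on $\gamma,\beta$ is needed, which is why the statement delivers \emph{unconditional} (not merely almost-always) surjectivity for any Pre-LayerNorm wrapper.
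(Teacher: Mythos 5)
Your proposal is correct and follows essentially the same route as the paper: rewrite $g(x)=y$ as a fixed-point equation for $T_y(x)=y-f(\LN(x))$, use the boundedness of $\LN$ and continuity of $f$ to show $T_y$ maps a closed ball into itself, and apply Brouwer's fixed point theorem. Your explicit handling of the degenerate locus $x=\overline{x}\cdot\mathbf{1}$ is a small additional care the paper omits, but otherwise the arguments coincide.
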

\begin{proof}
    By the definition of Pre-LayerNorm, the normalized vector before the affine transformation has 2-norm $\sqrt{d}$. Therefore, $
    \|\LN(x)\|\leq \sqrt{d}\|\gamma\|+\|\beta\|$.
    Since $f$ is continuous on the compact ball of radius $\sqrt{d}\|\gamma\|+\|\beta\|$, we have
    \begin{align*}
        M=\sup_{x\in\R^d}\|f(\LN(x))\|<\infty.
    \end{align*}
      To prove surjectivity of $g$, we need to prove that for any $y\in\R^d$ there exists an input $x^*$ such that $g(x^*)=y$. Below we fix $y$ and prove such $x^*$ exists. 
        Let $F:\R^d\to\R^d$ be a function defined by
      $
        F(x)=y-f(\LN(x)).$
        Now we find a fixed point for $F$. Let $R=M+\|y\|+1$, then by triangle inequality we have $\|F(x)\|<R$. Therefore, $F$ maps $B^d(R)$ into itself. We can thus define the restriction of $F$ on $B^d(R)$ as a function $F|_{B^d(R)}:B^d(R)\to B^d(R)$. By \Cref{thm:brouwer}, we know that there exists $x^*\in B^d(R)$ such that $F|_{B^d(R)}(x^*)=x^*$. Plugging this $x^*$ back to the definition of $F$ we know $F(x^*)=x^*=y-f(\LN(x^*))$. Thus $g(x^*)=f(\LN(x^*))+x^*=y$. 
        This establishes that for  any $y\in\R^d$ there exists a corresponding input $x^*$, so $g$ is surjective.
\end{proof}

    This is a powerful theorem in the sense that it places minimal requirements on the function $f$. Nearly all modern neural networks are continuous, so the theorem implies that any architecture wrapped with Pre-LayerNorm is surjective. Notably, the proof does not rely on the specific expression of LayerNorm, but only on the fact that it is continuous and bounded. Hence if one uses other types of normalization functions with this property instead, like RMSNorm~\citep{zhang2019rootmeansquarelayer}, GroupNorm~\citep{wu2018groupnormalization} or DyT~\citep{zhu2025transformersnormalization}, this theorem still holds. Finally, this theorem can be easily extended to sequence models as we show below.
\begin{restatable}{theorem}{prelnseq}\label{thm:prelnseq}
    Let $f:\oplus_{i\in[n]}\R^d\to\oplus_{i\in[n]}\R^d$ be a continuous function, then $g:\oplus_{i\in[n]}\R^d\to\oplus_{i\in[n]}\R^d$ defined by $g(a)_i=f(\LN(a))_i+a_i$ is surjective.
\end{restatable}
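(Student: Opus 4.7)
The plan is to mimic the proof of \Cref{thm:preln} essentially verbatim, treating the direct sum $\oplus_{i\in[n]}\R^d$ as a single Euclidean space $\R^{nd}$ and invoking Brouwer's fixed point theorem (\Cref{thm:brouwer}) in the corresponding high-dimensional ball. The role played by the coordinate-wise boundedness of $\LN$ in \Cref{thm:preln} is played here by the per-token boundedness of $\LN$ applied to each $a_i$.

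First I would record the uniform bound on $\LN$ in the sequence setting. By the definition of layer normalization and the triangle inequality, each per-token output satisfies $\|\LN(a)_i\| = \|\LN(a_i)\| < \|\gamma\| + \|\beta\|$, and therefore in the direct-sum norm
\begin{align*}
\|\LN(a)\|^2 = \sum_{i=1}^n \|\LN(a_i)\|^2 \leq n(\|\gamma\|+\|\beta\|)^2.
\end{align*}
Hence the image of $\LN$ lies inside a bounded subset $S\subset\R^{nd}$. Since $f$ is continuous on $\R^{nd}$ it is bounded on the compact closure $\overline{S}$, which gives $M := \sup_{a}\|f(\LN(a))\| < \infty$.

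Next, I would fix an arbitrary target sequence $y\in\oplus_{i\in[n]}\R^d$ and define $F:\R^{nd}\to\R^{nd}$ by $F(a)=y-f(\LN(a))$. Setting $R = M + \|y\| + 1$, the triangle inequality gives $\|F(a)\| < R$ for every $a$, so the restriction of $F$ to the closed ball $B^{nd}(R)$ is a continuous self-map. Applying \Cref{thm:brouwer} produces a fixed point $a^*\in B^{nd}(R)$ satisfying $a^* = y - f(\LN(a^*))$, from which $g(a^*)_i = f(\LN(a^*))_i + a^*_i = y_i$ for every $i\in[n]$, establishing surjectivity.

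The main obstacle is mostly bookkeeping: one must be careful that continuity of $f$ is interpreted as joint continuity on the direct sum (not token-wise), and that the per-token bound on $\LN$ aggregates properly through the direct-sum norm, picking up only a harmless factor of $\sqrt{n}$. Once these two observations are in place the argument transfers from \Cref{thm:preln} with no further changes, and in particular no structural assumption on $f$ (such as permutation equivariance or acting token-wise) is needed, which is what allows the theorem to be applied to attention layers, MLP blocks, and arbitrary compositions of them.
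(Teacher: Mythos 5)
Your proposal is correct and follows essentially the same route as the paper: both treat $\oplus_{i\in[n]}\R^d$ as $\R^{nd}$, bound $f\circ\LN$ using the per-token boundedness of $\LN$, and apply Brouwer's fixed point theorem to $F(a)=y-f(\LN(a))$ on $B^{nd}(R)$ with $R=M+\|y\|+1$. Your added bookkeeping on how the per-token bound aggregates in the direct-sum norm is a fine (if slightly more explicit) version of what the paper does.
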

The proof of \Cref{thm:prelnseq} is deferred to the Appendix.

Next, we briefly discuss Post-LayerNorm for completeness.
Since $\LN$ normalizes its input, Post-LayerNorm cannot be onto $\R^d$; ignoring constant inputs where the idealized $\LN$ without $\varepsilon$ is undefined, its image is
\begin{align*}
    S=\Bp{\gamma\odot z+\beta\middle|z\in\R^d,\ \|z\|=\sqrt d,\ \overline z=0}.
\end{align*}
It suffices for $f(x)+x$ to cover a neighborhood of 0: any nonconstant pre-image of $y\in S$ can be scaled into that neighborhood without changing its LayerNorm.
\begin{theorem}
    Let $f:\R^d\to\R^d$ be a bias-free GeLU MLP whose hidden dimensions are all $d$. Then $\LN(f(x)+x)$ is almost always surjective on $S$.
\end{theorem}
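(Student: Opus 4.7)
As the preceding paragraph observes, it is enough to show that the image of $h(x) := f(x) + x$ contains an open neighborhood of the origin. Indeed, suppose $B^d(\varepsilon) \subset h(\R^d)$ for some $\varepsilon > 0$. Any target point of $S$ has the form $y = \gamma \odot u + \beta$ with $\|u\| = 1$ and $\overline{u} = 0$, and the vector $z = (\varepsilon/2)\, u$ then lies in $B^d(\varepsilon)$ and satisfies $\LN(z) = \gamma \odot u + \beta = y$. So any preimage of $z$ under $h$ is mapped to $y$ by $\LN \circ h$, which yields surjectivity onto $S$.

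To produce such an open set, I plan to apply the Inverse Function Theorem (\Cref{thm:inverse}) at the point $x = 0$. Because $f$ has no bias terms and $\mathrm{GeLU}(0) = 0$, every intermediate activation vanishes at the origin, so $f(0) = 0$ and $h(0) = 0$. Using the chain rule together with $\mathrm{GeLU}'(0) = 1/2$, the Jacobian at the origin reads
\begin{align*}
    Dh(0) \;=\; I + Df(0) \;=\; I + 2^{-m}\, W_m W_{m-1} \cdots W_1.
\end{align*}
Whenever this matrix is invertible, the Inverse Function Theorem produces open neighborhoods $U \ni 0$ and $V \ni 0$ between which $h$ is a bijection, and $V \subset h(\R^d)$ is precisely the open neighborhood of $0$ required by the first paragraph.

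Finally, I will rule out non-invertibility on a measure-zero set of parameters. The quantity $\det\!\left(I + 2^{-m} W_m \cdots W_1\right)$ is a polynomial in the entries of $(W_1, \ldots, W_m)$, and it is not identically zero since evaluating at $W_1 = \cdots = W_m = 0$ gives $\det(I) = 1$. The standard fact that the zero locus of a nonzero real polynomial on Euclidean space has Lebesgue measure zero then implies that $Dh(0)$ is invertible for all parameters outside a measure-zero set, completing the argument. The only subtle step is the reduction carried out in the first paragraph: once one realizes that hitting a full open neighborhood of $0$ is enough to sweep out every admissible direction in $S$ after normalization, the remainder is a routine differential calculation plus a polynomial non-vanishing argument.
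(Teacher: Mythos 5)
Your proof is correct and follows essentially the same route as the paper: both reduce to showing that $h(x)=f(x)+x$ has an image containing an open neighborhood of $0$, compute $Dh(0)=I+s^m W_m\cdots W_1$ with $s=\mathrm{GeLU}'(0)=1/2$, invoke the Inverse Function Theorem, and then use positive homogeneity of $\LN$ to cover $S$. The only difference is cosmetic: you spell out the ``almost always invertible'' step via the nonvanishing-polynomial / measure-zero argument, whereas the paper asserts it directly.
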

\begin{proof}
    Let $g(x)=f(x)+x$ and $s=\mathrm{GeLU}'(0)$. By the chain rule,
    \begin{align*}
        Dg(0)=sI^d\cdot W_m\cdot sI^d\cdots sI^d\cdot W_1+I=s^m\prod_{i=m}^1 W_i+I
    \end{align*}
    which is almost always full rank. Since $g(0)=0$, \Cref{thm:inverse} gives an open $U\ni0$ with $U\subset g(\R^d)$. For any $y\in S$, choose nonconstant $v$ with $\LN(v)=y$ and $\mu>0$ with $\mu v\in U$. Then $\LN(\mu v)=y$, and since $\mu v=g(x)$ for some $x$, we have $\LN(g(x))=y$.
\end{proof}
\begin{remark}
    We do not aim to provide a comprehensive result for MLPs with Post-LayerNorm. Many of the conditions in this proposition can be relaxed. The network architecture can be quite flexible, as long as its determinant almost never vanishes and the zero input maps to zero output. Moreover, MLPs without bias terms are not uncommon in practice~\citep{groeneveld2024olmoacceleratingsciencelanguage,touvron2023llama2openfoundation,chowdhery2022palmscalinglanguagemodeling}.
\end{remark}
This $S$-surjectivity does not compose directly, since the next block's domain would also become $S$; we leave $S$-to-$S$ Post-LayerNorm surjectivity for future work.

\subsection{Linear Attention}
In recent years, linear attention mechanisms have gained significant attention due to their improved scalability compared to the original attention mechanism. In this section, we prove that the Retention layer (\Cref{def:retnet}) is almost always surjective. The key step is to reduce the sequence-level problem to a token-level map of a particular form.

\begin{theorem}\label{thm:ret}
    $\mathrm{Ret}$ is almost always surjective.
\end{theorem}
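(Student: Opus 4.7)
The plan is to establish surjectivity of $\Ret$ inductively by solving for $a_1, a_2, \ldots, a_n$ one at a time, exploiting the causal structure of the layer. Given a target sequence $b_1, \ldots, b_n$, once $a_1, \ldots, a_{i-1}$ are fixed, the state $S_{i-1}$ from \Cref{def:retnet} is determined, and we must solve the step equation $S_{i-1} Q a_i + (a_i^\top K^\top Q a_i) V a_i = b_i$ for $a_i$. The heart of the proof is a step lemma asserting that, for almost every $(K, Q, V)$ and almost every $b \in \R^d$, the map $a \mapsto S Q a + (a^\top K^\top Q a) V a$ attains $b$.

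To prove this step lemma, first I would reduce the equation to a scalar form. Assuming $V$ is invertible --- which fails only on a measure-zero parameter set --- set $P = K^\top Q$, $c = V^{-1} b$, and $N = V^{-1} S Q$. The step equation becomes $(N + s I) a = c$ coupled with the scalar consistency condition $s = a^\top P a$. Eliminating $a = (N + s I)^{-1} c$ (when the matrix is invertible) yields a single scalar equation $s = c^\top (N + s I)^{-\top} P (N + s I)^{-1} c$ in the unknown $s \in \R$.

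The key observation is that clearing denominators turns this equation into a polynomial in $s$ of odd degree $2d + 1$. Writing $(N + sI)^{-1} = \mathrm{adj}(N + sI) / \det(N + sI)$, the numerator $c^\top \mathrm{adj}(N + sI)^\top P \mathrm{adj}(N + sI) c$ has degree at most $2(d - 1)$ in $s$, while $s \cdot \det(N + sI)^2$ is monic of degree $2d + 1$. Every odd-degree real polynomial has at least one real root $s^*$, and for generic parameters $s^*$ does not coincide with any eigenvalue of $-N$, so $a = (N + s^* I)^{-1} c$ is well-defined and produces the required preimage.

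The main obstacle is handling genericity cleanly through the induction: the bad configurations where the step lemma fails --- for instance, $V$ singular, every real root of the degree-$(2d+1)$ polynomial coinciding with a singularity of $N + sI$, or the constructed $a_i$ forcing the next $N$ into a degenerate form --- must form a measure-zero set in $(K, Q, V)$. I expect this to follow because each such obstruction is an algebraic (polynomial) condition in the parameters and targets, so the union across the $n$ inductive steps is a finite union of proper algebraic subvarieties, which has Lebesgue measure zero. Combining the step lemma with induction on $i$ then yields a full preimage $a_1, \ldots, a_n$ for almost every target sequence, establishing that $\Ret$ is almost always surjective.
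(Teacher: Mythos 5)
Your proof follows the same inductive decomposition as the paper---fix the target $b_1,\dots,b_n$, solve for $a_1,\dots,a_n$ one step at a time, and reduce each step to surjectivity of the one-step map $a \mapsto S Q a + (a^\top K^\top Q a) V a$---but the step lemma is proved by a genuinely different route. The paper isolates the step as \Cref{lem:ret} (surjectivity of $f(x)=Mx+(x^\top N x)x$) and proves it via degree theory: it builds the homotopy $F(x,t)=tMx+(x^\top Nx)x$, carves out a neighborhood of the cone $x^\top Nx=0$ where preimages could escape to infinity, replaces $f$ there by a coercive surrogate $h_\delta$, and shows the modified map has nonzero Brouwer degree. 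You instead exploit the bilinear algebraic structure: after multiplying by $V^{-1}$, the step equation becomes the system $(N+sI)a=c$, $s=a^\top Pa$; eliminating $a$ gives a rational scalar equation, and clearing denominators yields a monic real polynomial $p(s)=s\det(N+sI)^2-c^\top \mathrm{adj}(N+sI)^\top P\,\mathrm{adj}(N+sI)c$ of odd degree $2d+1$, whose guaranteed real root $s^*$ (generically distinct from the real eigenvalues of $-N$) produces the explicit preimage $a=(N+s^*I)^{-1}c$. Your route is more elementary and more constructive---no degree theory, an explicit candidate $a$, and a bound of $2d+1$ on the number of solutions per step---whereas the paper's topological argument is designed to generalize to other linear-attention updates such as Mamba-2 and RWKV-6 (see the Appendix). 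Two places to tighten, which you yourself flag: (i) the claim that real roots of $p$ generically avoid real eigenvalues of $-N$ should be cashed out via a resultant of $p$ and $\det(N+sI)$, observing that when $-s_0$ is a simple eigenvalue of $N$ the adjugate is rank one so $q(s_0)\neq 0$ generically and $s_0$ is not a root of $p$; and (ii) because $S_{i-1}$ depends on earlier $a_j$'s only algebraically, not polynomially, in $(K,Q,V,b_{<i})$, the ``union of proper subvarieties'' step is better phrased as a Fubini argument over $b$ for each fixed generic $(K,Q,V)$, in the spirit of the paper's bookkeeping in \Cref{lem:full_ret}.
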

\begin{proof}
    By definition, we need to prove that outside a zero-measure subset of parameters, every output sequence $b_1,\cdots,b_n$ except for a zero-measure subset of output space has a corresponding input sequence $a_1,\cdots,a_n$. The proof is by induction on the output sequence. The first output $b_1$ only depends on $a_1$:
    \begin{align*}
        b_1=a_1^\top K^\top Qa_1Va_1\Rightarrow a_1=\Sp{\Sp{V^{-1}b_1}^\top K^\top QV^{-1}b_1}^{-1/3}V^{-1}b_1.  \end{align*} 
    This solution implicitly assumes that $V$ is invertible and $\Sp{V^{-1}b_1}^\top K^\top QV^{-1}b_1\neq0$,  both excluding zero measure sets from the Euclidean space. The induction hypothesis is: when searching for a pre-image of $b_j$ for $j>1$, pre-images $a_1, \dots, a_{j-1}$ are already determined and our choice of $a_j$ can only depend on $b_j$. That is, we want to solve for $a_j$ that meets the following requirement:
    \begin{align*}
        b_j=S_{j-1}Qa_j+Va_ja_j^\top K^\top Qa_j,
    \end{align*}
    where $S_{j-1}$ is the recurrence used in \Cref{def:retnet} as a function of $a_1, \dots, a_{j-1}$.
    Since $V$ is invertible, let $a_j'=Va_j$ and  define $b_j$ as function of $a'_j$:
    \begin{align*}
        b_j=f\Sp{a_j'}=S_{j-1}QV^{-1}a_j'+a_j'a_j'^\top\Sp{V^{-1}}^\top K^\top QV^{-1}a_j'=Ma_j'+\Sp{a_j'^\top Na_j'}a_j',
    \end{align*}
    where matrices $M=S_{j-1}QV^{-1}, N=\Sp{V^{-1}}^\top K^\top QV^{-1}$. At this point, the sequence-level problem has been reduced to studying the map $f(x)=Mx+(x^\top Nx)x$. We invoke the following lemma, whose proof is given after the theorem.
    \begin{restatable}{lemma}{retlemma}\label{lem:ret}
        Let $f:\R^d\to\R^d$ be defined by $f(x)=Mx+(x^\top Nx)x$. Suppose $N$ ranges over full-rank matrices and $M$ ranges over either full-rank matrices or nonzero low-rank matrices.\footnote{When we say ``almost all'' low-rank matrices, we mean after restricting to the set of low-rank matrices. Within this set, the typical case has rank $d-1$.} Then $f$ is almost always surjective.
    \end{restatable}
    Outside a zero measure set of parameters, $N$ is full rank and $M$ is either full rank or nonzero low-rank. Applying \Cref{lem:ret} to function $f(x)=Mx+(x^\top Nx)x$ shows that there exists $a_j'$ with $f(a_j')=b_j$. Since $V$ is invertible, this gives the desired $a_j=V^{-1}a_j'$ and completes the induction. Thus $\Ret$ is almost always surjective.
\end{proof}
In summary, we first use the causal structure of $\mathrm{Ret}$ to reduce the sequence-level pre-image problem to a token-level problem that can be solved inductively, outside the zero-measure exceptional set for the first output. Second, after a change of variables, the token-level map takes the abstract form $f(x)=Mx+(x^\top Nx)x$. We then apply \Cref{lem:ret} to it. Now we show the proof of~\Cref{lem:ret}.

For convenience, we introduce the following concept before proving the lemma.
\begin{definition}
    A function $f:\R^d\to\R^d$ is \emph{proper} if for every sequence $\{x_k\}_{k\in\N}$ such that $\|x_k\|\to\infty$, we have $\|f(x_k)\|\to\infty$.
\end{definition}

\begin{proof}[Proof of \Cref{lem:ret}]
    We decompose the proof into two parts. First, we show that for the present function, properness is sufficient for surjectivity. This is formalized by \Cref{lem:ret-proper-surj-app}. Second, we show that properness holds for almost all matrices in the parameter class stated above, which is formalized by \Cref{lem:ret-almost-proper}. Combining these two lemmas proves the claim.
\end{proof}

We defer the proof of \Cref{lem:ret-proper-surj-app} to the appendix, and prove a slightly weaker version for the specific form of $f$ considered here to showcase how the degree theory tool from \Cref{section:topology} can be used.
\begin{lemma}\label{lem:ret-proper-surj}
    Let $f:\R^d\to\R^d$ be defined by $f(x)=Mx+(x^\top Nx)x$. Suppose $M$ is full rank, $f$ is proper, and $\det Df(x)\neq0$ for every $x\in f^{-1}(0)$. Then $f$ is surjective.
\end{lemma}
\begin{proof}
    Fix a target output $y\in\R^d$. By properness, there exists $R>0$ such that $\|f(x)\|>\|y\|$, for all $\|x\|=R$.
    Consider the homotopy
    \begin{align*}
        F(x,t)=f(x)-ty,\quad t\in[0,1].
    \end{align*}
    For every $x\in\partial B^d(R)$ and every $t\in[0,1]$, we have $F(x,t)\neq0$, because otherwise $f(x)=ty$ would imply $\|f(x)\|\leq\|y\|$, contradicting the choice of $R$. Therefore, by the homotopy invariance of Brouwer degree in \Cref{lem:degree},
    \begin{align*}
        \deg(f-y,B^d(R),0)=\deg(f,B^d(R),0).
    \end{align*}
    It remains to calculate $\deg(f,B^d(R),0)$. By the non-degeneracy assumption, we can use the explicit formula in \Cref{eq:degree}. The origin is always a pre-image of $0$. Moreover, $Df(x)=M+(x^\top Nx)I^d+x\Sp{(N+N^\top)x}^\top$, and therefore $Df(0)=M$.
    Since $M$ is full rank, the contribution from $x=0$ is $\mathrm{sgn}\det M$. Now take any nonzero pre-image $x$ of $0$. The explicit form of $f$ gives $f(-x)=-f(x)$, so $-x$ is also a pre-image of $0$. These two pre-images are distinct, and the expression for $Df$ gives $Df(-x)=Df(x)$. Thus they contribute the same sign in \Cref{eq:degree}. Therefore the nonzero pre-images contribute an even integer, and
    \begin{align*}
        \deg(f,B^d(R),0)=\mathrm{sgn}\det M+2k
    \end{align*}
    for some $k\in\mathbb{Z}$. In particular, $\deg(f,B^d(R),0)\neq0$.

    It follows that $\deg(f-y,B^d(R),0)\neq0$. By the defining property of Brouwer degree, there exists $x\in B^d(R)$ such that $f(x)-y=0$. Since $y$ was arbitrary, $f$ is surjective.
\end{proof}
The intuition behind this degree calculation is that, when studying roots of a smooth function, local continuous changes should not change the existence of a root as long as no root crosses the boundary of the region under consideration. Brouwer degree formalizes this intuition. Properness lets us choose a sufficiently large ball whose boundary is not involved in the target equation, and homotopy invariance lets us replace $f-y$ by the more convenient map $f$ for the purpose of calculating degree. For this particular function, the calculation at $0$ is transparent because $Df(0)=M$. We provide more applications of Brouwer degree in the Appendix.

\begin{restatable}{lemma}{retalmostproper}\label{lem:ret-almost-proper}
    Let $f:\R^d\to\R^d$ be defined by $f(x)=Mx+(x^\top Nx)x$. Suppose $N$ ranges over full-rank matrices and $M$ ranges over either full-rank matrices or nonzero low-rank matrices. Then $f$ is proper for almost all such choices of $M,N$.
\end{restatable}
The proof of \Cref{lem:ret-almost-proper} is deferred to the Appendix. It is natural to ask whether the original attention mechanism is also surjective. We show that it is not through the following theorem.
\begin{restatable}{theorem}{notsur}\label{thm:notsur}
    $\mathrm{Attn}$ is not almost always surjective.
\end{restatable}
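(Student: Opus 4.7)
The plan is to exhibit an open, positive-measure set of parameters $(K, Q, V)$ and a target output sequence of length $n = 2$ that no input sequence maps to, thereby contradicting almost-sure surjectivity. The geometric crux is that each $b_i$ is a convex combination of $V a_1, \ldots, V a_i$, so for $n = 2$ the output $b_2$ is pinned on the open segment between $b_1 = V a_1$ and $V a_2$. When the bilinear form induced by $K^\top Q$ pushes the softmax weight too strongly onto $a_1$ in certain directions, this convex combination collapses and large excursions of $b_2$ in those directions become unreachable.

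First I would restrict to the open set where $V$ is invertible, so that any target $b_1^\ast = V a_1$ uniquely determines $a_1 = V^{-1} b_1^\ast$ and the problem reduces to choosing $a_2$ to hit a prescribed $b_2^\ast$. Substituting $v = V a_2$ into the attention formula yields
$$b_2 - b_1 = (1 - \lambda(v))\,(v - b_1), \qquad \lambda(v) = \frac{1}{1 + \exp(v^\top M v - p^\top v)},$$
with $M = (V^{-1})^\top K^\top Q V^{-1}$ and $p = (V^{-1})^\top Q^\top K a_1$. Since $1 - \lambda(v) \in (0,1)$, the vector $b_2 - b_1$ is always a \emph{positive} scalar multiple of $v - b_1$, so reaching a target displacement in direction $u$ forces $v$ to lie on the ray $\{b_1 + t u : t > 0\}$, and the achievable norm in direction $u$ equals $\sup_{t \geq 0} \, t\,(1 - \lambda(b_1 + t u))$.

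Next I would pick a unit direction $u$ with $u^\top M u < 0$. Along the ray the exponent becomes $\phi(t) = t^2\, u^\top M u + O(t)$, so $\phi(t) \to -\infty$ and $1 - \lambda$ decays like $\exp(-|u^\top M u|\, t^2 / 2)$ for large $t$, which easily beats the linear factor $t$. The continuous function $t\,(1 - \lambda(b_1 + tu))$ vanishes at $t = 0$ and as $t \to \infty$, so it attains a finite maximum, and any $b_2^\ast$ with $b_2^\ast - b_1^\ast$ in direction $u$ and norm exceeding this maximum has no preimage. Finally, via the change of variables $w = V^{-1} u$ one has $u^\top M u = \tfrac{1}{2} w^\top (K^\top Q + Q^\top K) w$, so such a $u$ exists iff $K^\top Q + Q^\top K$ has a negative eigenvalue. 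The set of $(K, Q, V)$ with $V$ invertible and $K^\top Q + Q^\top K$ indefinite is open (eigenvalues depend continuously on the matrix) and nonempty (e.g., $K = I^d$, $Q = -I^d$, $V = I^d$), hence has positive Lebesgue measure.

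The main obstacle is the asymptotic estimate in the third step: one must verify that the super-exponential decay of $1 - \lambda$ dominates the linear growth of $\|v - b_1\| = t$ \emph{uniformly} in $t \geq 0$, not merely asymptotically. This requires bounding the sub-leading linear terms $u^\top M b_1$ and $p^\top u$ inside $\phi(t)$ and then concluding (by continuity plus the vanishing of the function at both $t = 0$ and $t \to \infty$) that the supremum is indeed finite. The estimate is elementary calculus, but one must keep the cross terms in $\phi(t)$ straight so that the quadratic coefficient, which is strictly negative by the choice of $u$, eventually controls the bound.
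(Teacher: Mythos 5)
Your argument is correct and takes essentially the same approach as the paper's proof sketch: reduce to whether the last output token can escape a bounded region given the earlier inputs, observe that in directions where the bilinear form induced by $K^\top Q$ is negative the softmax weight on $Va_t$ decays super-exponentially and dominates the linear growth of $\|Va_t\|$, and note that the parameter condition ($K^\top Q$ not positive semidefinite, equivalently $K^\top Q + Q^\top K$ indefinite) holds on an open, nonempty, hence positive-measure set. Your write-up is considerably more precise than the paper's brief sketch, in particular the explicit $n=2$ convex-combination formula and the substitution $M = (V^{-1})^\top K^\top Q V^{-1}$; the only point left slightly implicit on both sides is verifying that the unreachable targets form a set of positive measure in $\R^{2d}$ rather than a single ray, which follows by letting $u$ range over a compact sub-cone of $\{u : u^\top M u < 0\}$ to get a uniform radius and then applying Fubini over $b_1$.
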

We defer the proof of this theorem to the Appendix. Here we provide a brief overview of why attention itself is not surjective.
For a positive-measure set of parameters, the symmetric part of $K^\top Q$ is negative definite. If the first output $b_1$ is kept in a bounded region, then $a_1$ is also bounded when $V$ is invertible. At the second position, the self-attention weight on $Va_2$ decays like $\exp(a_2^\top K^\top Qa_2)$ as $\|a_2\|$ grows, while the cross term involving $a_1$ grows only linearly in $\|a_2\|$. Consequently $b_2$ remains bounded whenever $b_1$ is bounded, so output pairs with bounded $b_1$ and sufficiently large $b_2$ form a positive-measure unreachable set.
We provide more such negative results in the Appendix.

\section{Implications of Surjectivity on Modern Networks and Applications}\label{section:safety}
In this section, we discuss how our theoretical results on the surjectivity of large models from \Cref{section:theory} relate to safety concerns in real-world settings, particularly with respect to adversarial attacks. Our goal is not to provide an exhaustive list of all practical models or attacks. 
Instead, we present concrete examples of generative models used across diverse application areas --- such as language, vision, and robotics --- to illustrate the scope and implications of the surjectivity theory.

\subsection{Language Models}
By \Cref{thm:preln,thm:prelnseq}, Pre-LayerNorm is surjective; since it is used in every layer of modern Transformers, the Transformer architecture itself is also surjective.
\begin{corollary}\label{cor:transformer}
    $\TF$ is almost always surjective, and so are compositions of $\TF$.
\end{corollary}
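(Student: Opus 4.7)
The plan is to decompose the Transformer block $\TF$ into two sub-blocks that each match the Pre-LayerNorm template of \Cref{thm:prelnseq}, and then conclude via the closure of surjectivity under composition noted at the beginning of \Cref{section:theory}.

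First I would isolate the attention sub-block $c_i = \Attn(\LN(a))_i + a_i$. The attention map is a composition of matrix products, $\exp$, and division by the strictly positive normalizer $Z_i$, so it is continuous from $\oplus_{i\in[n]}\R^d$ to itself regardless of the choice of $K,Q,V$. Applying \Cref{thm:prelnseq} with $f = \Attn$ immediately yields surjectivity of the map $a \mapsto c$. Next I would treat the MLP sub-block $b_i = W_2\,\mathrm{GeLU}(W_1 \LN(c_i) + \lambda_1) + \lambda_2 + c_i$. Using that $\LN$ applied to a sequence acts positionwise, i.e., $\LN(c)_i = \LN(c_i)$, I would define the positionwise map $\tilde f(c)_i = W_2\,\mathrm{GeLU}(W_1 c_i + \lambda_1) + \lambda_2$ and rewrite the sub-block as $b_i = \tilde f(\LN(c))_i + c_i$. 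Since GeLU and matrix multiplication are continuous, so is $\tilde f$, and \Cref{thm:prelnseq} gives surjectivity of $c \mapsto b$.

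With both sub-blocks surjective, $\TF$ is a composition of surjective maps and is therefore itself surjective; a further application of the same composition property handles stacks of $\TF$ blocks, giving the second half of the corollary. The main thing to watch is that the decomposition genuinely fits the template $g(a)_i = f(\LN(a))_i + a_i$ of \Cref{thm:prelnseq} in both steps --- in particular that the outer residual adds $c_i$ rather than $\LN(c_i)$, and that the positionwise MLP is read as a single continuous map on $\oplus_{i\in[n]}\R^d$ --- which is exactly what the two verifications above accomplish. There is no measure-zero exceptional set to contend with, since \Cref{thm:prelnseq} is parameter-free; so in fact the corollary's ``almost always surjective'' conclusion can be strengthened to genuine surjectivity for every choice of parameters, and the main obstacle reduces to the essentially bookkeeping task of aligning the $\TF$ formula with the Pre-LayerNorm template.
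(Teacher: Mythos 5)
Your proposal is correct and matches the paper's (implicit) argument exactly: the corollary is derived by viewing $\TF$ as the composition of an attention sub-block and an MLP sub-block, each fitting the Pre-LayerNorm template of \Cref{thm:prelnseq}, and then invoking closure of surjectivity under composition. Your remark that the conclusion actually holds for \emph{every} parameter choice (not merely almost always) is also right, since \Cref{thm:prelnseq} requires only continuity of the wrapped function.
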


Our results are proved in embedding space and assume direct control over the input embedding, unlike typical LLM deployments. Language models take discrete tokens mapped through a fixed embedding function, and most generative models are decoder-only and autoregressive. Thus our theorems do not directly apply to prompt-based attacks that exploit autoregressive generation~\citep{zou2023universaltransferableadversarialattacks}. This limitation is arguably reassuring (\Cref{thm:autoregressive-transformer-not-surj}): if surjectivity held in that setting, it could imply a broader vulnerability surface.

However, our results raise questions about how to interpret language-model input-output behavior. Copyright concerns often cite a model's ability to reproduce specific outputs, such as a sentence from a proprietary source~\citep{he2025fantasticcopyrightedbeastsnot}, as evidence.
Our findings imply that, in principle, any sentence can be produced by decoding the final-layer output from a suitable input embedding, even if the model was trained autoregressively. In the Appendix, we demonstrate this on GPT-2~\citep{Radford2018ImprovingLU}: decoding the last layer yields a 37-word sentence from a 2025 New York Times article that could not have appeared in training. Thus, caution is needed when inferring private or copyrighted data from outputs alone, especially for inputs outside typical usage patterns.

\subsection{Vision Models}\label{sec:vision}
Diffusion models are a class of generative models originally proposed for image generation~\citep{sohldickstein2015deepunsupervisedlearningusing}. They are now widely applied to other domains, including video generation~\citep{liu2024sorareviewbackgroundtechnology}, robotics~\citep{chi2024diffusionpolicyvisuomotorpolicy}, and beyond.
Here we describe the generation process of diffusion models. Let the data we want to generate be represented by a vector $x\in\R^d$. First, one generates $x(0)\sim\mathcal{N}\Sp{0,I^d}$ from a Gaussian distribution. We regard $x$ as a variable depending on $t\in[0,1]$. After that, $x$ evolves, or diffuses, according to a velocity vector field $v(x,t)\in\R^d$. The field $v$ is trained in the hope that $x(1)$ follows the same distribution as the data distribution $p(x)$ that we want to generate. Formally we have
\begin{align*}
    \dd x/\dd t=v(x,t), x(0)\sim\mathcal{N}\Sp{0,I^d},x(1)\sim p(x).
\end{align*}
In practice directly solving this equation is often intractable, so we discretize $[0,1]$ into intervals $\Bp{\Mp{z_k,z_{k+1}}}_{k\in[m-1]}$ with $z_1=0,z_m=1$ and generate via approximation
\begin{align}\label{eq:diffusion}
    x\Sp{z_{k+1}}=x(z_k)+v(x(z_k),z_k)(z_{k+1}-z_{k}), \text{for all } k\in[m-1].
\end{align}

Early approaches to diffusion models inject Gaussian noise at inference time~\citep[see e.g.][]{ho2020denoisingdiffusionprobabilisticmodels}, resulting in stochastic updates that led to high quality outputs but slowed down the generation process. Here, we focus on the more recent \emph{deterministic} ODE solvers~\citep{song2022denoisingdiffusionimplicitmodels} as described above, which also benefit from faster generation. Velocity predictor $v$ is usually parameterized as a U-Net~\citep{ronneberger2015unetconvolutionalnetworksbiomedical,ho2020denoisingdiffusionprobabilisticmodels} or Transformer~\citep{peebles2023scalablediffusionmodelstransformers}. When $v$ is implemented using a transformer, $x$ is tokenized into a sequence of vectors before being passed to the Transformer.

For image generation, a noisy image $x(0)$ evolves under $v$ into a noiseless image $x(1)$. By \Cref{eq:diffusion}, since the first layer of $v$ is typically a normalization (GroupNorm for U-Nets and Pre-LayerNorm for Transformers), \Cref{thm:preln,thm:prelnseq} imply that deterministic diffusion models are almost always surjective from noise space to output space, suggesting inherent vulnerability to adversarial attacks. 
Indeed, prior work~\citep{zeng2024advi2iadversarialimageattack} constructed noise vectors $x(0)$ that generate harmful content; our results show that, regardless of training, such an input exists for any output image. 

\subsection{Robotics}
Neural networks, in particular sequence models, are increasingly common in robotics, and are making robots increasingly powerful. However, this also gives rise to safety concerns given our results.
As an example, let us take a policy network from prior work~\citep{radosavovic2023realworldhumanoidlocomotionreinforcement}, which follows a widely used design in practice.
The network is implemented by a causally masked Transformer, i.e. compositions of $\mathrm{TF}$. 
At timestep $t$, the action $b_t$\footnote{Robotics convention uses $a$ for action, but we use $b$ here to keep notations in this paper consistent.} is generated by the Transformer on input sequence $a_1,b_1,\cdots,a_{t-1},b_{t-1},a_t$. Here $a$ is the sequence of observations from the environment.

Note that this sequence model diverges from the ones we studied in the earlier section, by interleaving the true input sequence (observations) and previous outputs (actions). This interleaving is done to improve the smoothness of robot actions.
Using similar techniques from \Cref{section:theory}, we can prove that this policy network is almost always surjective.
\begin{restatable}{theorem}{rob}\label{thm:label}
    Let $\mathrm{Rob}$ be a composition of $\TF$ blocks. Given a sequence $a$, we iteratively calculate sequence $b$ as $b_t=\mathrm{Rob}(a_1,b_1,\cdots,b_{t-1},a_t),t\geq2;b_1=\mathrm{Rob}(a_1)$.
    This defines a function $f$ from $a$ to $b$. $f$ is almost always surjective.
\end{restatable}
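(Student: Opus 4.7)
The plan is to prove surjectivity by induction on $t$, leveraging the same Pre-LN residual structure that underpins \Cref{thm:preln} and \Cref{thm:prelnseq}, and closing each inductive step with Brouwer's Fixed Point Theorem (\Cref{thm:brouwer}). The key structural observation is that every $\TF$ block admits the decomposition $\TF(u)_i = u_i + \phi(u)_i$, where $\phi$ is continuous and uniformly bounded in the $i$-th position: the attention branch $\Attn(\LN(u))_i$ is a convex combination of the $V\LN(u)_j$'s, each of norm at most $\|V\|(\|\gamma\|+\|\beta\|)$, and the MLP branch feeds $\LN(c_i)$ into a continuous finite-dimensional map whose input is already bounded. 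Composing $L$ such layers preserves this structure, because the $\LN$ at the start of each subsequent layer immediately absorbs whatever unbounded component came out of the previous residual. Inductively, this gives
\begin{equation*}
\mathrm{Rob}(u)_i = u_i + h(u)_i, \qquad \sup_{u}\|h(u)_i\|<\infty \text{ for every position } i.
\end{equation*}

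With this in hand, I would induct on $t$. Suppose observations $a_1,\ldots,a_{t-1}$ have already been chosen so that the autoregressive procedure outputs $b_1^*,\ldots,b_{t-1}^*$. To find $a_t$ achieving an arbitrary target $b_t^*$, consider the length-$(2t-1)$ input sequence $\tilde{a}(x) = (a_1, b_1^*, a_2, b_2^*, \ldots, a_{t-1}, b_{t-1}^*, x)$ and look only at its last output component. By the structural identity above together with causal masking, $\mathrm{Rob}(\tilde{a}(x))_{2t-1} = x + H(x)$, where $H(x) := h(\tilde{a}(x))_{2t-1}$ is continuous and satisfies $\|H(x)\|\le M$ for some constant $M$ independent of $x$. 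Mirroring the proof of \Cref{thm:preln}, define $F(x) := b_t^* - H(x)$; this is a continuous self-map of the closed ball of radius $\|b_t^*\|+M+1$, so by \Cref{thm:brouwer} it has a fixed point $x^*$. Then $x^* + H(x^*) = b_t^*$, i.e., the last output of $\mathrm{Rob}(\tilde{a}(x^*))$ equals $b_t^*$. Setting $a_t := x^*$ closes the induction; the base case $t=1$ is the same argument applied to the trivial length-one sequence.

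The main obstacle is the structural claim that $h$ remains position-wise uniformly bounded across arbitrary compositions of $\TF$ layers. After a single layer, the input to the next layer already has the form ``identity plus bounded correction'', which is no longer bounded in absolute terms; one must verify that pushing such an input through another $\TF$ still produces ``identity plus bounded correction''. The mechanism is precisely that the $\LN$ at the start of each layer normalizes the unbounded identity component away before it ever reaches the attention or MLP subnetworks, so the newly added correction is bounded by a constant depending only on the layer's parameters. Once this book-keeping is in place, the Brouwer step is a direct transplant of the argument of \Cref{thm:preln}, and it imposes no genericity requirement on the parameters, so the conclusion actually holds for every parameter configuration and thus in particular almost always.
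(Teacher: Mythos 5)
Your proposal is correct and takes essentially the same route as the paper's proof: induct over timesteps, fix the already-constructed prefix so that $b_t$ depends on $a_t$ only through a Pre-LayerNorm residual structure, and close with Brouwer's fixed point theorem. The only (cosmetic) difference is that you verify the ``identity plus uniformly bounded correction'' form for the entire stack and invoke \Cref{thm:brouwer} once, whereas the paper composes the per-layer surjectivity already supplied by \Cref{thm:preln} and \Cref{cor:transformer}; your explicit book-keeping of why the boundedness survives composition is a welcome elaboration of a step the paper leaves implicit.
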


We defer the proof to the Appendix. This theorem means that for the policy network described in~\cite{radosavovic2023realworldhumanoidlocomotionreinforcement},
almost any action sequence can be induced by some corresponding sequence of inputs --- such as a video clip played for the robot --- regardless of how undesirable or unsafe the resulting behavior may be.

\section{Broader Discussion on the Implications of Surjectivity.}
\label{sec:broad}
In this paper, we introduced the study of surjectivity of neural networks as a concrete formalization of studying the power of safety training and jailbreak vulnerabilities.
In this section, we dive into the broader discussion of what surjectivity is, what it is not, and what surjectivity of modern networks implies for AI safety research.

\subsection{Theoretical Implications and the Formalism of Surjectivity}
\label{sec:broad:theory}

\paragraph{On Theoretical Implications of Surjectivity.}
One major goal of safety training is to limit a model's ability to generate harmful outcomes.
From a theoretical perspective, surjectivity implies that a model is vulnerable to jailbreaks in principle. That is, every outcome including those that are considered harmful by the model providers, can be generated by some input. 

We make no claim in the other direction. In particular, it is possible that many or even every harmful behavior by a non-surjective model can still be elicited by some input, while the model's lack of surjectivity is due to its inability to produce other non-harmful behavior.

\paragraph{On Surjectivity versus Model Capability.}
Model capability perhaps is best captured through details of the input-output behavior. 
By design, our study of surjectivity is agnostic to whether the input-output relationship a model captures is a complex or an interesting one. 
Indeed, even simple functions, such as the \emph{identity} function, can be surjective, while they are often not capturing interesting input-output behaviors. 
Our study of surjectivity intentionally decouples risks that are rooted in an attacker's ability to elicit particular behaviors --- which is the main consideration of jailbreaks --- from 
domain-specific risks that arise from having highly capable AI models in certain areas (such as bioweapons, etc.) in the first place.

One can take for granted that models that have undergone safety-training (and as a result are the main subject of the study of jailbreaks) are already highly capable models that have been trained on troves of relevant data to capture the complex relationship between the input and output spaces that deviates significantly from simple functions such as the identity. By establishing that these trained models are almost always surjective, our work highlights their inherent vulnerability, regardless of how capable the models are. That is, with enough information and computational power, an attacker can elicit any behavior from the model, including harmful ones.

\paragraph{On Surjectivity Versus Having Full Support.}
One might ask how surjectivity is different from a common assumption that generative models have full support?
This question stems from viewing the outcome of the neural network as a stochastic function from input to the output space, while in this work we view fully trained networks as deterministic functions. 
When considering fully trained generative models as deterministic functions, we find two perspectives to be instructive. 

The first perspective is to consider generative models with deterministic decoding, e.g., decoding the probability distribution greedily, with beam search, or with temperatures close to $0$. In this perspective, the observation that a stochastic function has full support does not imply that its deterministic decoding schemes are surjective.

A second perspective is to consider the hidden embedding computed by the network as a deterministic function.
Take GPT, for example, as a function that computes vector $b$, the hidden embedding output of its last block.
Embedding $b$ is used by the model to output any token $i$ from a finite set of tokens with embeddings $h_i$, with probability $p(i\mid b)\propto \exp(b^\top h_i)$. 
GPT has full support, i.e., $p(i\mid b)>0$ for all $i$, for any finite hidden embedding $b$.
On the other hand, our surjectivity results can make a particular token arbitrarily likely when its embedding has a direction that separates it from the other token embeddings. Take an arbitrary token $i$ with this property. Surjectivity of the network implies that for some input, the transformer's hidden embedding is $b = \lambda h_i$ for large $\lambda$. If $h_i^\top h_i>h_i^\top h_j$ for all other token embeddings $h_j$, then $p(j|b)\rightarrow 0$ for all $h_j \neq h_i$ as $\lambda\to\infty$. 
Thus surjectivity of the transformer implies that such a token can be made the deterministic greedy-decoding output.

\paragraph{On Surjectivity versus Impact of Implicit Regularization.}
There is a line of work~\citep{simon2018algorithmic,arora2019implicit} suggesting that standard optimization algorithms such as gradient descent provide implicit regularization, encouraging the models to bias towards lower-rank parameters. A natural question is whether such structure would impact our analysis on being almost always surjective. We would like to remark that the ``low-rank'' phenomena typically correspond to approximately low-rank matrices whose singular values decay but are not exactly zero in practice. These matrices are still full rank, and hence the corresponding linear maps have range equal to the full output space. From the perspective of our results, such models remain in the generic region where surjectivity holds; the main effect of regularization is to make certain output directions correspond to small singular directions, so that reaching them may require large-norm or highly atypical inputs. This is precisely the regime in which adversarial optimization can search for off-distribution inputs that exploit these directions, which is consistent with the ``in-principle'' vulnerability captured by surjectivity.

\subsection{Practical Implications of Surjectivity in Safety}

\paragraph{On computational and statistical considerations.}
In practice, the relationship between surjectivity and jailbreak vulnerabilities is more nuanced than explored in Section~\ref{sec:broad:theory}.
Surjectivity is an existential rather than a constructive statement about the mapping between input and output spaces: the existence of a pre-image $x$ for a harmful output $y$ does not imply that such a pre-image can be found computationally or information-theoretically efficiently.

From a computational perspective, finding such a pre-image may be intractable in the worst case. However, worst-case hardness results do not provide meaningful safety guarantees, since attackers need only succeed on some harmful instances and may have access to significant computational resources. The proliferation of jailbreaks in practice suggests that computational difficulty is rarely a bottleneck in the average or practical case.

From an information-theoretic perspective, surjectivity-based risks depend on the attacker's knowledge of the target outcome $y$. For example, in the language domain, a model's surjectivity demonstrates moderate risk, as it could point to \emph{repeated-after-me} attacks. Though this attack would let the model output harmful content, it requires that the attacker knows the harmful text they want to elicit in detail. They cannot elicit sensitive hidden information
that is a priori unavailable to the attacker (e.g., personally identifiable information, bioweapon
risks). In other domains, such as robotics, the risk is more severe. For example, when outcome $y$ represents an action in the physical domain --- such as the trajectory of an autonomous drone or a robot arm --- the knowledge of the exact trajectory a drone must take to hit an object at a destructive speed may be known to the attacker. In these settings, surjectivity presents a significant concern as an attacker with knowledge of $y$ can craft an input that elicits the destructive behavior.

Still, many jailbreaks that do uncover new information can be framed in terms of surjectivity. For example, \emph{suffix-injection} attacks~\citep{wei2023jailbrokendoesllmsafety,zou2023universaltransferableadversarialattacks,wang2024attngcgenhancingjailbreakingattacks} work by eliciting outputs of the form $y = ab$, where $a$ is a fixed prefix (e.g., ``Certainly! Here is ...'', ``Sure, the answer is ...'') and $b$ contains novel potentially harmful information the model providers have sought to prohibit.
Here the attacker's goal is not to learn anything from $a$ itself, but to ensure that the model commits to $a$ as the starting condition, exploiting the autoregressive continuation that makes generating $b$ more likely.
In other words, even though $a$ carries no new information, the ability to force such a prefix (as captured through the lens of surjectivity) significantly increases the probability of eliciting harmful $b$.

More broadly, we believe that a natural direction for future work is to examine how much partial information about $y$ is sufficient to reconstruct a problematic input $x$ --- for instance, quantifying what fraction of the output tokens must be fixed or how many adaptive queries an attacker can make before finding an $x$ that produces an output in the vicinity of $y$.

\paragraph{On Implications of Surjectivity on Safety Interventions.}
Surjectivity also bears on the discourse in the AI safety community, in particular about two types of overall practical approaches to safety, which we summarize as ``train-for-safety''~\citep{grattafiori2024llama3herdmodels,NEURIPS2024_9aa51796} versus ``filter-for-safety''~\citep{inan2023llamaguardllmbasedinputoutput,shi2025promptarmor}.
The former includes works on post-training for safety, RLHF/RLAIF, harmless finetuning, and safety pre-training that aim to bake restrictions on the output space into model weights at training time, while the latter includes task-specific filters and constitutional classifiers that post-hoc filter the generated outcomes. The train-for-safety methods offer low-latency and efficient generation policies that are highly desirable to the model providers compared to post-hoc filtering that discards harmful generated responses. The surjectivity of a model can be taken as further evidence that the train-for-safety paradigm is not a sufficient line of defense on its own.

\section*{Acknowledgment}
We thank Zhiyuan Li, Kaifeng Lyu, Yen Jen Wang, Jiaxin Ge, Andrew Wagenmaker, Margalit Glasgow, Haiwen Feng, Junyi Zhang, Surbhi Goel, and Ariel Procaccia for helpful conversations. This work was supported in part by the National Science Foundation under grant CCF-2145898, by the Office of Naval Research under grant N00014-24-1-2159, by an Alfred P.\ Sloan fellowship, and by a Schmidt Sciences AI2050 fellowship.
\bibliographystyle{plainnat}
\bibliography{reference.bib}

@article{zhang2019rootmeansquarelayer,
  title={Root mean square layer normalization},
  author={Zhang, Biao and Sennrich, Rico},
  journal={Advances in Neural Information Processing Systems},
  volume={32},
  year={2019}
}

@misc{hendrycks2023gaussianerrorlinearunits,
      title={Gaussian Error Linear Units (GELUs)}, 
      author={Dan Hendrycks and Kevin Gimpel},
      year={2023},
      eprint={1606.08415},
      archivePrefix={arXiv},
      primaryClass={cs.LG},
      url={https://arxiv.org/abs/1606.08415}, 
}

@inproceedings{he2015deepresiduallearningimage,
  title={Deep residual learning for image recognition},
  author={He, Kaiming and Zhang, Xiangyu and Ren, Shaoqing and Sun, Jian},
  booktitle={Proceedings of the IEEE conference on computer vision and pattern recognition},
  pages={770--778},
  year={2016}
}

@misc{ba2016layernormalization,
      title={Layer Normalization}, 
      author={Jimmy Lei Ba and Jamie Ryan Kiros and Geoffrey E. Hinton},
      year={2016},
      eprint={1607.06450},
      archivePrefix={arXiv},
      primaryClass={stat.ML},
      url={https://arxiv.org/abs/1607.06450}, 
}

@inproceedings{xiong2020layernormalizationtransformerarchitecture,
  title={On layer normalization in the transformer architecture},
  author={Xiong, Ruibin and Yang, Yunchang and He, Di and Zheng, Kai and Zheng, Shuxin and Xing, Chen and Zhang, Huishuai and Lan, Yanyan and Wang, Liwei and Liu, Tieyan},
  booktitle={International conference on machine learning},
  pages={10524--10533},
  year={2020},
  organization={PMLR}
}

@misc{bahdanau2016neuralmachinetranslationjointly,
      title={Neural Machine Translation by Jointly Learning to Align and Translate}, 
      author={Dzmitry Bahdanau and Kyunghyun Cho and Yoshua Bengio},
      year={2016},
      eprint={1409.0473},
      archivePrefix={arXiv},
      primaryClass={cs.CL},
      url={https://arxiv.org/abs/1409.0473}, 
}

@article{vaswani2023attentionneed,
  title={Attention is all you need},
  author={Vaswani, Ashish and Shazeer, Noam and Parmar, Niki and Uszkoreit, Jakob and Jones, Llion and Gomez, Aidan N and Kaiser, {\L}ukasz and Polosukhin, Illia},
  journal={Advances in neural information processing systems},
  volume={30},
  year={2017}
}

@misc{sun2023retentivenetworksuccessortransformer,
      title={Retentive Network: A Successor to Transformer for Large Language Models}, 
      author={Yutao Sun and Li Dong and Shaohan Huang and Shuming Ma and Yuqing Xia and Jilong Xue and Jianyong Wang and Furu Wei},
      year={2023},
      eprint={2307.08621},
      archivePrefix={arXiv},
      primaryClass={cs.CL},
      url={https://arxiv.org/abs/2307.08621}, 
}

@article{brown2020languagemodelsfewshotlearners,
  title={Language models are few-shot learners},
  author={Brown, Tom and Mann, Benjamin and Ryder, Nick and Subbiah, Melanie and Kaplan, Jared D and Dhariwal, Prafulla and Neelakantan, Arvind and Shyam, Pranav and Sastry, Girish and Askell, Amanda and others},
  journal={Advances in neural information processing systems},
  volume={33},
  pages={1877--1901},
  year={2020}
}

@article{Radford2018ImprovingLU,
  title={Improving language understanding by generative pre-training},
  author={Radford, Alec and Narasimhan, Karthik and Salimans, Tim and Sutskever, Ilya and others},
  year={2018},
  publisher={San Francisco, CA, USA}
}

@inproceedings{zhu2025transformersnormalization,
  title={Transformers without normalization},
  author={Zhu, Jiachen and Chen, Xinlei and He, Kaiming and LeCun, Yann and Liu, Zhuang},
  booktitle={Proceedings of the Computer Vision and Pattern Recognition Conference},
  pages={14901--14911},
  year={2025}
}

@inproceedings{sohldickstein2015deepunsupervisedlearningusing,
  title={Deep unsupervised learning using nonequilibrium thermodynamics},
  author={Sohl-Dickstein, Jascha and Weiss, Eric and Maheswaranathan, Niru and Ganguli, Surya},
  booktitle={International conference on machine learning},
  pages={2256--2265},
  year={2015},
  organization={pmlr}
}

@article{ho2020denoisingdiffusionprobabilisticmodels,
  title={Denoising diffusion probabilistic models},
  author={Ho, Jonathan and Jain, Ajay and Abbeel, Pieter},
  journal={Advances in neural information processing systems},
  volume={33},
  pages={6840--6851},
  year={2020}
}

@misc{song2022denoisingdiffusionimplicitmodels,
      title={Denoising Diffusion Implicit Models}, 
      author={Jiaming Song and Chenlin Meng and Stefano Ermon},
      year={2022},
      eprint={2010.02502},
      archivePrefix={arXiv},
      primaryClass={cs.LG},
      url={https://arxiv.org/abs/2010.02502}, 
}

@inproceedings{wu2018groupnormalization,
  title={Group normalization},
  author={Wu, Yuxin and He, Kaiming},
  booktitle={Proceedings of the European conference on computer vision (ECCV)},
  pages={3--19},
  year={2018}
}

@misc{groeneveld2024olmoacceleratingsciencelanguage,
      title={OLMo: Accelerating the Science of Language Models}, 
      author={Dirk Groeneveld and Iz Beltagy and Pete Walsh and Akshita Bhagia and Rodney Kinney and Oyvind Tafjord and Ananya Harsh Jha and Hamish Ivison and Ian Magnusson and Yizhong Wang and Shane Arora and David Atkinson and Russell Authur and Khyathi Raghavi Chandu and Arman Cohan and Jennifer Dumas and Yanai Elazar and Yuling Gu and Jack Hessel and Tushar Khot and William Merrill and Jacob Morrison and Niklas Muennighoff and Aakanksha Naik and Crystal Nam and Matthew E. Peters and Valentina Pyatkin and Abhilasha Ravichander and Dustin Schwenk and Saurabh Shah and Will Smith and Emma Strubell and Nishant Subramani and Mitchell Wortsman and Pradeep Dasigi and Nathan Lambert and Kyle Richardson and Luke Zettlemoyer and Jesse Dodge and Kyle Lo and Luca Soldaini and Noah A. Smith and Hannaneh Hajishirzi},
      year={2024},
      eprint={2402.00838},
      archivePrefix={arXiv},
      primaryClass={cs.CL},
      url={https://arxiv.org/abs/2402.00838}, 
}

@misc{touvron2023llama2openfoundation,
      title={Llama 2: Open Foundation and Fine-Tuned Chat Models}, 
      author={Hugo Touvron and Louis Martin and others},
      year={2023},
      eprint={2307.09288},
      archivePrefix={arXiv},
      primaryClass={cs.CL},
      url={https://arxiv.org/abs/2307.09288}, 
}

@article{chowdhery2022palmscalinglanguagemodeling,
  title={Palm: Scaling language modeling with pathways},
  author={Chowdhery, Aakanksha and Narang, Sharan and Devlin, Jacob and Bosma, Maarten and Mishra, Gaurav and Roberts, Adam and Barham, Paul and Chung, Hyung Won and Sutton, Charles and Gehrmann, Sebastian and others},
  journal={Journal of Machine Learning Research},
  volume={24},
  number={240},
  pages={1--113},
  year={2023}
}

@misc{wang2024attngcgenhancingjailbreakingattacks,
      title={AttnGCG: Enhancing Jailbreaking Attacks on LLMs with Attention Manipulation}, 
      author={Zijun Wang and Haoqin Tu and Jieru Mei and Bingchen Zhao and Yisen Wang and Cihang Xie},
      year={2024},
      eprint={2410.09040},
      archivePrefix={arXiv},
      primaryClass={cs.CL},
      url={https://arxiv.org/abs/2410.09040}, 
}

@misc{hu2024drojpromptdrivenattacklarge,
      title={DROJ: A Prompt-Driven Attack against Large Language Models}, 
      author={Leyang Hu and Boran Wang},
      year={2024},
      eprint={2411.09125},
      archivePrefix={arXiv},
      primaryClass={cs.CL},
      url={https://arxiv.org/abs/2411.09125}, 
}

@misc{he2025fantasticcopyrightedbeastsnot,
      title={Fantastic Copyrighted Beasts and How (Not) to Generate Them}, 
      author={Luxi He and Yangsibo Huang and Weijia Shi and Tinghao Xie and Haotian Liu and Yue Wang and Luke Zettlemoyer and Chiyuan Zhang and Danqi Chen and Peter Henderson},
      year={2025},
      eprint={2406.14526},
      archivePrefix={arXiv},
      primaryClass={cs.CV},
      url={https://arxiv.org/abs/2406.14526}, 
}

@misc{zeng2024advi2iadversarialimageattack,
      title={AdvI2I: Adversarial Image Attack on Image-to-Image Diffusion models}, 
      author={Yaopei Zeng and Yuanpu Cao and Bochuan Cao and Yurui Chang and Jinghui Chen and Lu Lin},
      year={2024},
      eprint={2410.21471},
      archivePrefix={arXiv},
      primaryClass={cs.CV},
      url={https://arxiv.org/abs/2410.21471}, 
}

@article{radosavovic2023realworldhumanoidlocomotionreinforcement,
  title={Real-world humanoid locomotion with reinforcement learning},
  author={Radosavovic, Ilija and Xiao, Tete and Zhang, Bike and Darrell, Trevor and Malik, Jitendra and Sreenath, Koushil},
  journal={Science Robotics},
  volume={9},
  number={89},
  pages={eadi9579},
  year={2024},
  publisher={American Association for the Advancement of Science}
}

@misc{zhuo2025hybridnormstableefficienttransformer,
      title={HybridNorm: Towards Stable and Efficient Transformer Training via Hybrid Normalization}, 
      author={Zhijian Zhuo and Yutao Zeng and Ya Wang and Sijun Zhang and Jian Yang and Xiaoqing Li and Xun Zhou and Jinwen Ma},
      year={2025},
      eprint={2503.04598},
      archivePrefix={arXiv},
      primaryClass={cs.CL},
      url={https://arxiv.org/abs/2503.04598}, 
}

@misc{li2024mixlnunleashingpowerdeeper,
      title={Mix-LN: Unleashing the Power of Deeper Layers by Combining Pre-LN and Post-LN}, 
      author={Pengxiang Li and Lu Yin and Shiwei Liu},
      year={2024},
      eprint={2412.13795},
      archivePrefix={arXiv},
      primaryClass={cs.LG},
      url={https://arxiv.org/abs/2412.13795}, 
}

@misc{openai2024gpt4technicalreport,
      title={GPT-4 Technical Report}, 
      author={OpenAI and Josh Achiam and others},
      year={2024},
      eprint={2303.08774},
      archivePrefix={arXiv},
      primaryClass={cs.CL},
      url={https://arxiv.org/abs/2303.08774}, 
}

@misc{liu2024sorareviewbackgroundtechnology,
      title={Sora: A Review on Background, Technology, Limitations, and Opportunities of Large Vision Models}, 
      author={Yixin Liu and Kai Zhang and Yuan Li and Zhiling Yan and Chujie Gao and Ruoxi Chen and Zhengqing Yuan and Yue Huang and Hanchi Sun and Jianfeng Gao and Lifang He and Lichao Sun},
      year={2024},
      eprint={2402.17177},
      archivePrefix={arXiv},
      primaryClass={cs.CV},
      url={https://arxiv.org/abs/2402.17177}, 
}

@article{chi2024diffusionpolicyvisuomotorpolicy,
  title={Diffusion policy: Visuomotor policy learning via action diffusion},
  author={Chi, Cheng and Xu, Zhenjia and Feng, Siyuan and Cousineau, Eric and Du, Yilun and Burchfiel, Benjamin and Tedrake, Russ and Song, Shuran},
  journal={The International Journal of Robotics Research},
  pages={02783649241273668},
  year={2023},
  publisher={SAGE Publications Sage UK: London, England}
}

@inproceedings{ronneberger2015unetconvolutionalnetworksbiomedical,
  title={U-net: Convolutional networks for biomedical image segmentation},
  author={Ronneberger, Olaf and Fischer, Philipp and Brox, Thomas},
  booktitle={Medical image computing and computer-assisted intervention--MICCAI 2015: 18th international conference, Munich, Germany, October 5-9, 2015, proceedings, part III 18},
  pages={234--241},
  year={2015},
  organization={Springer}
}

@book{hirsch1976differential,
  author    = {Morris W. Hirsch},
  title     = {Differential Topology},
  publisher = {Springer New York, NY},
  year      = {1976},
  series    = {Graduate Texts in Mathematics},
  volume    = {33},
  edition   = {1},
  isbn      = {978-0-387-90148-0},
  doi       = {10.1007/978-1-4684-9449-5},
  url       = {https://doi.org/10.1007/978-1-4684-9449-5},
  pages     = {x + 222},
  note      = {Also available as eBook, ISBN 978-1-4684-9449-5; Softcover ISBN 978-1-4684-9451-8 (2012)},
  keywords  = {Manifolds and Cell Complexes, Differential Topology},
}

@misc{zou2023universaltransferableadversarialattacks,
      title={Universal and Transferable Adversarial Attacks on Aligned Language Models}, 
      author={Andy Zou and Zifan Wang and Nicholas Carlini and Milad Nasr and J. Zico Kolter and Matt Fredrikson},
      year={2023},
      eprint={2307.15043},
      archivePrefix={arXiv},
      primaryClass={cs.CL},
      url={https://arxiv.org/abs/2307.15043}, 
}

@inproceedings{peebles2023scalablediffusionmodelstransformers,
  title={Scalable diffusion models with transformers},
  author={Peebles, William and Xie, Saining},
  booktitle={Proceedings of the IEEE/CVF international conference on computer vision},
  pages={4195--4205},
  year={2023}
}

@book{DincaMawhin2021,
  author       = {George Dinca and Jean Mawhin},
  title        = {Brouwer Degree: The Core of Nonlinear Analysis},
  year         = {2021},
  publisher    = {Birkhäuser Cham},
  series       = {Progress in Nonlinear Differential Equations and Their Applications},
  volume       = {XIX},
  pages        = {447},
  doi          = {10.1007/978-3-030-63230-4},
  isbn         = {978-3-030-63229-8},
  isbn-soft    = {978-3-030-63232-8},
  isbn-ebook   = {978-3-030-63230-4},
  series_issn  = {1421-1750},
  series_eissn = {2374-0280},
  keywords     = {Functional Analysis, Analysis, Difference and Functional Equations, Topology},
  note         = {2 illustrations in colour},
}

@inproceedings{rezende2016variationalinferencenormalizingflows,
  title={Variational inference with normalizing flows},
  author={Rezende, Danilo and Mohamed, Shakir},
  booktitle={International conference on machine learning},
  pages={1530--1538},
  year={2015},
  organization={PMLR}
}

@misc{dinh2015nicenonlinearindependentcomponents,
      title={NICE: Non-linear Independent Components Estimation}, 
      author={Laurent Dinh and David Krueger and Yoshua Bengio},
      year={2015},
      eprint={1410.8516},
      archivePrefix={arXiv},
      primaryClass={cs.LG},
      url={https://arxiv.org/abs/1410.8516}, 
}

@misc{dinh2017densityestimationusingreal,
      title={Density estimation using Real NVP}, 
      author={Laurent Dinh and Jascha Sohl-Dickstein and Samy Bengio},
      year={2017},
      eprint={1605.08803},
      archivePrefix={arXiv},
      primaryClass={cs.LG},
      url={https://arxiv.org/abs/1605.08803}, 
}

@article{kingma2017improvingvariationalinferenceinverse,
  title={Improved variational inference with inverse autoregressive flow},
  author={Kingma, Durk P and Salimans, Tim and Jozefowicz, Rafal and Chen, Xi and Sutskever, Ilya and Welling, Max},
  journal={Advances in neural information processing systems},
  volume={29},
  year={2016}
}

@article{papamakarios2018maskedautoregressiveflowdensity,
  title={Masked autoregressive flow for density estimation},
  author={Papamakarios, George and Pavlakou, Theo and Murray, Iain},
  journal={Advances in neural information processing systems},
  volume={30},
  year={2017}
}

@article{kingma2018glowgenerativeflowinvertible,
  title={Glow: Generative flow with invertible 1x1 convolutions},
  author={Kingma, Durk P and Dhariwal, Prafulla},
  journal={Advances in neural information processing systems},
  volume={31},
  year={2018}
}

@article{durkan2019neuralsplineflows,
  title={Neural spline flows},
  author={Durkan, Conor and Bekasov, Artur and Murray, Iain and Papamakarios, George},
  journal={Advances in neural information processing systems},
  volume={32},
  year={2019}
}

@article{chen2020residualflowsinvertiblegenerative,
  title={Residual flows for invertible generative modeling},
  author={Chen, Ricky TQ and Behrmann, Jens and Duvenaud, David K and Jacobsen, J{\"o}rn-Henrik},
  journal={Advances in Neural Information Processing Systems},
  volume={32},
  year={2019}
}

@article{gomez2017reversibleresidualnetworkbackpropagation,
  title={The reversible residual network: Backpropagation without storing activations},
  author={Gomez, Aidan N and Ren, Mengye and Urtasun, Raquel and Grosse, Roger B},
  journal={Advances in neural information processing systems},
  volume={30},
  year={2017}
}

@misc{jacobsen2018irevnetdeepinvertiblenetworks,
      title={i-RevNet: Deep Invertible Networks}, 
      author={Jörn-Henrik Jacobsen and Arnold Smeulders and Edouard Oyallon},
      year={2018},
      eprint={1802.07088},
      archivePrefix={arXiv},
      primaryClass={cs.LG},
      url={https://arxiv.org/abs/1802.07088}, 
}

@inproceedings{behrmann2019invertibleresidualnetworks,
  title={Invertible residual networks},
  author={Behrmann, Jens and Grathwohl, Will and Chen, Ricky TQ and Duvenaud, David and Jacobsen, J{\"o}rn-Henrik},
  booktitle={International conference on machine learning},
  pages={573--582},
  year={2019},
  organization={PMLR}
}

@article{song2019mintnetbuildinginvertibleneural,
  title={Mintnet: Building invertible neural networks with masked convolutions},
  author={Song, Yang and Meng, Chenlin and Ermon, Stefano},
  journal={Advances in Neural Information Processing Systems},
  volume={32},
  year={2019}
}

@article{mackay2018reversiblerecurrentneuralnetworks,
  title={Reversible recurrent neural networks},
  author={MacKay, Matthew and Vicol, Paul and Ba, Jimmy and Grosse, Roger B},
  journal={Advances in Neural Information Processing Systems},
  volume={31},
  year={2018}
}

@misc{kitaev2020reformerefficienttransformer,
      title={Reformer: The Efficient Transformer}, 
      author={Nikita Kitaev and Łukasz Kaiser and Anselm Levskaya},
      year={2020},
      eprint={2001.04451},
      archivePrefix={arXiv},
      primaryClass={cs.LG},
      url={https://arxiv.org/abs/2001.04451}, 
}

@inproceedings{mangalam2023reversiblevisiontransformers,
  title={Reversible vision transformers},
  author={Mangalam, Karttikeya and Fan, Haoqi and Li, Yanghao and Wu, Chao-Yuan and Xiong, Bo and Feichtenhofer, Christoph and Malik, Jitendra},
  booktitle={Proceedings of the IEEE/CVF Conference on Computer Vision and Pattern Recognition},
  pages={10830--10840},
  year={2022}
}

@article{wei2023jailbrokendoesllmsafety,
  title={Jailbroken: How does llm safety training fail?},
  author={Wei, Alexander and Haghtalab, Nika and Steinhardt, Jacob},
  journal={Advances in Neural Information Processing Systems},
  volume={36},
  pages={80079--80110},
  year={2023}
}

@inproceedings{wan2023poisoninglanguagemodelsinstruction,
  title={Poisoning language models during instruction tuning},
  author={Wan, Alexander and Wallace, Eric and Shen, Sheng and Klein, Dan},
  booktitle={International Conference on Machine Learning},
  pages={35413--35425},
  year={2023},
  organization={PMLR}
}

@misc{rando2024universaljailbreakbackdoorspoisoned,
      title={Universal Jailbreak Backdoors from Poisoned Human Feedback}, 
      author={Javier Rando and Florian Tramèr},
      year={2024},
      eprint={2311.14455},
      archivePrefix={arXiv},
      primaryClass={cs.AI},
      url={https://arxiv.org/abs/2311.14455}, 
}

@misc{ma2024jailbreakingpromptattackcontrollable,
      title={Jailbreaking Prompt Attack: A Controllable Adversarial Attack against Diffusion Models}, 
      author={Jiachen Ma and Anda Cao and Zhiqing Xiao and Yijiang Li and Jie Zhang and Chao Ye and Junbo Zhao},
      year={2024},
      eprint={2404.02928},
      archivePrefix={arXiv},
      primaryClass={cs.CR},
      url={https://arxiv.org/abs/2404.02928}, 
}

@inproceedings{qi2023visualadversarialexamplesjailbreak,
  title={Visual adversarial examples jailbreak aligned large language models},
  author={Qi, Xiangyu and Huang, Kaixuan and Panda, Ashwinee and Henderson, Peter and Wang, Mengdi and Mittal, Prateek},
  booktitle={Proceedings of the AAAI conference on artificial intelligence},
  volume={38},
  pages={21527--21536},
  year={2024}
}

@misc{wang2025exploringadversarialvulnerabilitiesvisionlanguageaction,
      title={Exploring the Adversarial Vulnerabilities of Vision-Language-Action Models in Robotics}, 
      author={Taowen Wang and Cheng Han and James Chenhao Liang and Wenhao Yang and Dongfang Liu and Luna Xinyu Zhang and Qifan Wang and Jiebo Luo and Ruixiang Tang},
      year={2025},
      eprint={2411.13587},
      archivePrefix={arXiv},
      primaryClass={cs.RO},
      url={https://arxiv.org/abs/2411.13587}, 
}

@misc{imagenteamgoogle2024imagen3,
      title={Imagen 3}, 
      author={Imagen-Team-Google and Jason Baldridge and others},
      year={2024},
      eprint={2408.07009},
      archivePrefix={arXiv},
      primaryClass={cs.CV},
      url={https://arxiv.org/abs/2408.07009}, 
}

@misc{grattafiori2024llama3herdmodels,
      title={The Llama 3 Herd of Models}, 
      author={Aaron Grattafiori and Abhimanyu Dubey and others},
      year={2024},
      eprint={2407.21783},
      archivePrefix={arXiv},
      primaryClass={cs.AI},
      url={https://arxiv.org/abs/2407.21783}, 
}

@misc{kim2024openvlaopensourcevisionlanguageactionmodel,
      title={OpenVLA: An Open-Source Vision-Language-Action Model}, 
      author={Moo Jin Kim and Karl Pertsch and Siddharth Karamcheti and Ted Xiao and Ashwin Balakrishna and Suraj Nair and Rafael Rafailov and Ethan Foster and Grace Lam and Pannag Sanketi and Quan Vuong and Thomas Kollar and Benjamin Burchfiel and Russ Tedrake and Dorsa Sadigh and Sergey Levine and Percy Liang and Chelsea Finn},
      year={2024},
      eprint={2406.09246},
      archivePrefix={arXiv},
      primaryClass={cs.RO},
      url={https://arxiv.org/abs/2406.09246}, 
}

@misc{geminiroboticsteam2025geminiroboticsbringingai,
      title={Gemini Robotics: Bringing AI into the Physical World}, 
      author={Gemini Robotics Team and Saminda Abeyruwan and others},
      year={2025},
      eprint={2503.20020},
      archivePrefix={arXiv},
      primaryClass={cs.RO},
      url={https://arxiv.org/abs/2503.20020}, 
}

@misc{yang2024gatedlinearattentiontransformers,
      title={Gated Linear Attention Transformers with Hardware-Efficient Training}, 
      author={Songlin Yang and Bailin Wang and Yikang Shen and Rameswar Panda and Yoon Kim},
      year={2024},
      eprint={2312.06635},
      archivePrefix={arXiv},
      primaryClass={cs.LG},
      url={https://arxiv.org/abs/2312.06635}, 
}

@misc{perez2022ignorepreviouspromptattack,
      title={Ignore Previous Prompt: Attack Techniques For Language Models}, 
      author={Fábio Perez and Ian Ribeiro},
      year={2022},
      eprint={2211.09527},
      archivePrefix={arXiv},
      primaryClass={cs.CL},
      url={https://arxiv.org/abs/2211.09527}, 
}

@misc{liu2024promptinjectionattackllmintegrated,
      title={Prompt Injection attack against LLM-integrated Applications}, 
      author={Yi Liu and Gelei Deng and Yuekang Li and Kailong Wang and Zihao Wang and Xiaofeng Wang and Tianwei Zhang and Yepang Liu and Haoyu Wang and Yan Zheng and Yang Liu},
      year={2024},
      eprint={2306.05499},
      archivePrefix={arXiv},
      primaryClass={cs.CR},
      url={https://arxiv.org/abs/2306.05499}, 
}

@inproceedings{rao-etal-2024-tricking,
    title = "Tricking {LLM}s into Disobedience: Formalizing, Analyzing, and Detecting Jailbreaks",
    author = "Rao, Abhinav Sukumar  and
      Naik, Atharva Roshan  and
      Vashistha, Sachin  and
      Aditya, Somak  and
      Choudhury, Monojit",
    editor = "Calzolari, Nicoletta  and
      Kan, Min-Yen  and
      Hoste, Veronique  and
      Lenci, Alessandro  and
      Sakti, Sakriani  and
      Xue, Nianwen",
    booktitle = "Proceedings of the 2024 Joint International Conference on Computational Linguistics, Language Resources and Evaluation (LREC-COLING 2024)",
    month = may,
    year = "2024",
    address = "Torino, Italia",
    publisher = "ELRA and ICCL",
    url = "https://aclanthology.org/2024.lrec-main.1462/",
    pages = "16802--16830",
    abstract = "Recent explorations with commercial Large Language Models (LLMs) have shown that non-expert users can jailbreak LLMs by simply manipulating their prompts; resulting in degenerate output behavior, privacy and security breaches, offensive outputs, and violations of content regulator policies. Limited studies have been conducted to formalize and analyze these attacks and their mitigations. We bridge this gap by proposing a formalism and a taxonomy of known (and possible) jailbreaks. We survey existing jailbreak methods and their effectiveness on open-source and commercial LLMs (such as GPT-based models, OPT, BLOOM, and FLAN-T5-XXL). We further discuss the challenges of jailbreak detection in terms of their effectiveness against known attacks. For further analysis, we release a dataset of model outputs across 3700 jailbreak prompts over 4 tasks."
}

@misc{wei2024jailbreakguardalignedlanguage,
      title={Jailbreak and Guard Aligned Language Models with Only Few In-Context Demonstrations}, 
      author={Zeming Wei and Yifei Wang and Ang Li and Yichuan Mo and Yisen Wang},
      year={2024},
      eprint={2310.06387},
      archivePrefix={arXiv},
      primaryClass={cs.LG},
      url={https://arxiv.org/abs/2310.06387}, 
}

@inproceedings{li-etal-2023-multi-step,
    title = "Multi-step Jailbreaking Privacy Attacks on {C}hat{GPT}",
    author = "Li, Haoran  and
      Guo, Dadi  and
      Fan, Wei  and
      Xu, Mingshi  and
      Huang, Jie  and
      Meng, Fanpu  and
      Song, Yangqiu",
    editor = "Bouamor, Houda  and
      Pino, Juan  and
      Bali, Kalika",
    booktitle = "Findings of the Association for Computational Linguistics: EMNLP 2023",
    month = dec,
    year = "2023",
    address = "Singapore",
    publisher = "Association for Computational Linguistics",
    url = "https://aclanthology.org/2023.findings-emnlp.272/",
    doi = "10.18653/v1/2023.findings-emnlp.272",
    pages = "4138--4153",
    abstract = "With the rapid progress of large language models (LLMs), many downstream NLP tasks can be well solved given appropriate prompts. Though model developers and researchers work hard on dialog safety to avoid generating harmful content from LLMs, it is still challenging to steer AI-generated content (AIGC) for the human good. As powerful LLMs are devouring existing text data from various domains (e.g., GPT-3 is trained on 45TB texts), it is natural to doubt whether the private information is included in the training data and what privacy threats can these LLMs and their downstream applications bring. In this paper, we study the privacy threats from OpenAI`s ChatGPT and the New Bing enhanced by ChatGPT and show that application-integrated LLMs may cause new privacy threats. To this end, we conduct extensive experiments to support our claims and discuss LLMs' privacy implications."
}

@INPROCEEDINGS{kang,
  author={Kang, Daniel and Li, Xuechen and Stoica, Ion and Guestrin, Carlos and Zaharia, Matei and Hashimoto, Tatsunori},
  booktitle={2024 IEEE Security and Privacy Workshops (SPW)}, 
  title={Exploiting Programmatic Behavior of LLMs: Dual-Use Through Standard Security Attacks}, 
  year={2024},
  volume={},
  number={},
  pages={132-143},
  keywords={Training;Threat modeling;Privacy;Computational modeling;Biological system modeling;Hate speech;Computer security},
  doi={10.1109/SPW63631.2024.00018}}

@misc{liu2024boostingjailbreaktransferabilitylarge,
      title={Boosting Jailbreak Transferability for Large Language Models}, 
      author={Hanqing Liu and Lifeng Zhou and Huanqian Yan},
      year={2024},
      eprint={2410.15645},
      archivePrefix={arXiv},
      primaryClass={cs.AI},
      url={https://arxiv.org/abs/2410.15645}, 
}

@misc{liu2024jailbreakingchatgptpromptengineering,
      title={Jailbreaking ChatGPT via Prompt Engineering: An Empirical Study}, 
      author={Yi Liu and Gelei Deng and Zhengzi Xu and Yuekang Li and Yaowen Zheng and Ying Zhang and Lida Zhao and Tianwei Zhang and Kailong Wang and Yang Liu},
      year={2024},
      eprint={2305.13860},
      archivePrefix={arXiv},
      primaryClass={cs.SE},
      url={https://arxiv.org/abs/2305.13860}, 
}

@inproceedings{deshpande-etal-2023-toxicity,
    title = "Toxicity in chatgpt: Analyzing persona-assigned language models",
    author = "Deshpande, Ameet  and
      Murahari, Vishvak  and
      Rajpurohit, Tanmay  and
      Kalyan, Ashwin  and
      Narasimhan, Karthik",
    editor = "Bouamor, Houda  and
      Pino, Juan  and
      Bali, Kalika",
    booktitle = "Findings of the Association for Computational Linguistics: EMNLP 2023",
    month = dec,
    year = "2023",
    address = "Singapore",
    publisher = "Association for Computational Linguistics",
    url = "https://aclanthology.org/2023.findings-emnlp.88/",
    doi = "10.18653/v1/2023.findings-emnlp.88",
    pages = "1236--1270",
    abstract = "Large language models (LLMs) have shown incredible capabilities and transcended the natural language processing (NLP) community, with adoption throughout many services like healthcare, therapy, education, and customer service. Since users include people with critical information needs like students or patients engaging with chatbots, the safety of these systems is of prime importance. Legislation has recognized its significance and recently drafted a {\textquotedblleft}Blueprint For An AI Bill Of Rights{\textquotedblright} which calls for domain experts to identify risks and potential impact of AI systems. To this end, we systematically evaluate toxicity in over half a million generations of ChatGPT, a popular dialogue-based LLM. We find that setting the system parameter of ChatGPT by assigning it a persona, say that of the boxer Muhammad Ali, significantly increases the toxicity of generations. Depending on the persona assigned to ChatGPT, its toxicity can increase up to $6\times$, with outputs engaging in incorrect stereotypes, harmful dialogue, and hurtful opinions. Furthermore, we find concerning patterns where specific entities (e.g., certain races) are targeted more than others ($3\times$ more) irrespective of the assigned persona, reflecting discriminatory biases in the model. Our findings show that multiple provisions in the legislative blueprint are being violated, and we hope that the broader AI community rethinks the efficacy of current safety guardrails and develops better techniques that lead to robust, safe, and trustworthy AI."
}

@misc{shah2023scalabletransferableblackboxjailbreaks,
      title={Scalable and Transferable Black-Box Jailbreaks for Language Models via Persona Modulation}, 
      author={Rusheb Shah and Quentin Feuillade--Montixi and Soroush Pour and Arush Tagade and Stephen Casper and Javier Rando},
      year={2023},
      eprint={2311.03348},
      archivePrefix={arXiv},
      primaryClass={cs.CL},
      url={https://arxiv.org/abs/2311.03348}, 
}

@misc{yong2024lowresourcelanguagesjailbreakgpt4,
      title={Low-Resource Languages Jailbreak GPT-4}, 
      author={Zheng-Xin Yong and Cristina Menghini and Stephen H. Bach},
      year={2024},
      eprint={2310.02446},
      archivePrefix={arXiv},
      primaryClass={cs.CL},
      url={https://arxiv.org/abs/2310.02446}, 
}

@misc{deng2024multilingualjailbreakchallengeslarge,
      title={Multilingual Jailbreak Challenges in Large Language Models}, 
      author={Yue Deng and Wenxuan Zhang and Sinno Jialin Pan and Lidong Bing},
      year={2024},
      eprint={2310.06474},
      archivePrefix={arXiv},
      primaryClass={cs.CL},
      url={https://arxiv.org/abs/2310.06474}, 
}

@inproceedings{xu-etal-2024-cognitive,
    title = "Cognitive Overload: Jailbreaking Large Language Models with Overloaded Logical Thinking",
    author = "Xu, Nan  and
      Wang, Fei  and
      Zhou, Ben  and
      Li, Bangzheng  and
      Xiao, Chaowei  and
      Chen, Muhao",
    editor = "Duh, Kevin  and
      Gomez, Helena  and
      Bethard, Steven",
    booktitle = "Findings of the Association for Computational Linguistics: NAACL 2024",
    month = jun,
    year = "2024",
    address = "Mexico City, Mexico",
    publisher = "Association for Computational Linguistics",
    url = "https://aclanthology.org/2024.findings-naacl.224/",
    doi = "10.18653/v1/2024.findings-naacl.224",
    pages = "3526--3548",
    abstract = "While large language models (LLMs) have demonstrated increasing power, they have also called upon studies on their vulnerabilities. As representatives, jailbreak attacks can provoke harmful or unethical responses from LLMs, even after safety alignment. In this paper, we investigate a novel category of jailbreak attacks specifically designed to target the cognitive structure and processes of LLMs. Specifically, we analyze the safety vulnerability of LLMs in the face of 1) multilingual cognitive overload, 2) veiled expression, and 3) effect-to- cause reasoning. Different from previous jailbreak attacks, our proposed cognitive overload is a black-box attack with no need for knowledge of model architecture or access to model weights. Experiments conducted on AdvBench and MasterKey reveal that various LLMs, including both popular open-source model Llama 2 and the proprietary model ChatGPT, can be compromised through cognitive overload. Motivated by cognitive psychology work on managing cognitive load, we further investigate defending cognitive overload attack from two perspectives. Empirical studies show that our cognitive overload from three perspectives can jailbreak all studied LLMs successfully, while existing defense strategies can hardly mitigate the caused malicious uses effectively."
}

@misc{yuan2024gpt4smartsafestealthy,
      title={GPT-4 Is Too Smart To Be Safe: Stealthy Chat with LLMs via Cipher}, 
      author={Youliang Yuan and Wenxiang Jiao and Wenxuan Wang and Jen-tse Huang and Pinjia He and Shuming Shi and Zhaopeng Tu},
      year={2024},
      eprint={2308.06463},
      archivePrefix={arXiv},
      primaryClass={cs.CL},
      url={https://arxiv.org/abs/2308.06463}, 
}

@misc{handa2025competencyreasoningopensdoor,
      title={When "Competency" in Reasoning Opens the Door to Vulnerability: Jailbreaking LLMs via Novel Complex Ciphers}, 
      author={Divij Handa and Zehua Zhang and Amir Saeidi and Shrinidhi Kumbhar and Chitta Baral},
      year={2025},
      eprint={2402.10601},
      archivePrefix={arXiv},
      primaryClass={cs.CL},
      url={https://arxiv.org/abs/2402.10601}, 
}

@inproceedings{zhang2025boostingjailbreakattackmomentum,
  title={Boosting jailbreak attack with momentum},
  author={Zhang, Yihao and Wei, Zeming},
  booktitle={ICASSP 2025-2025 IEEE International Conference on Acoustics, Speech and Signal Processing (ICASSP)},
  pages={1--5},
  year={2025},
  organization={IEEE}
}

@article{hu2025efficientllmjailbreakadaptive,
  title={Efficient llm jailbreak via adaptive dense-to-sparse constrained optimization},
  author={Hu, Kai and Yu, Weichen and Li, Yining and Yao, Tianjun and Li, Xiang and Liu, Wenhe and Yu, Lijun and Shen, Zhiqiang and Chen, Kai and Fredrikson, Matt},
  journal={Advances in Neural Information Processing Systems},
  volume={37},
  pages={23224--23245},
  year={2024}
}

@misc{jia2024improvedtechniquesoptimizationbasedjailbreaking,
      title={Improved Techniques for Optimization-Based Jailbreaking on Large Language Models}, 
      author={Xiaojun Jia and Tianyu Pang and Chao Du and Yihao Huang and Jindong Gu and Yang Liu and Xiaochun Cao and Min Lin},
      year={2024},
      eprint={2405.21018},
      archivePrefix={arXiv},
      primaryClass={cs.LG},
      url={https://arxiv.org/abs/2405.21018}, 
}

@misc{huang2024semanticguidedpromptorganizationuniversal,
      title={Semantic-guided Prompt Organization for Universal Goal Hijacking against LLMs}, 
      author={Yihao Huang and Chong Wang and Xiaojun Jia and Qing Guo and Felix Juefei-Xu and Jian Zhang and Geguang Pu and Yang Liu},
      year={2024},
      eprint={2405.14189},
      archivePrefix={arXiv},
      primaryClass={cs.CL},
      url={https://arxiv.org/abs/2405.14189}, 
}

@misc{geisler2025attackinglargelanguagemodels,
      title={Attacking Large Language Models with Projected Gradient Descent}, 
      author={Simon Geisler and Tom Wollschläger and M. H. I. Abdalla and Johannes Gasteiger and Stephan Günnemann},
      year={2025},
      eprint={2402.09154},
      archivePrefix={arXiv},
      primaryClass={cs.LG},
      url={https://arxiv.org/abs/2402.09154}, 
}

@misc{liao2024amplegcglearninguniversaltransferable,
      title={AmpleGCG: Learning a Universal and Transferable Generative Model of Adversarial Suffixes for Jailbreaking Both Open and Closed LLMs}, 
      author={Zeyi Liao and Huan Sun},
      year={2024},
      eprint={2404.07921},
      archivePrefix={arXiv},
      primaryClass={cs.CL},
      url={https://arxiv.org/abs/2404.07921}, 
}

@misc{kumar2024amplegcgplusstronggenerativemodel,
      title={AmpleGCG-Plus: A Strong Generative Model of Adversarial Suffixes to Jailbreak LLMs with Higher Success Rates in Fewer Attempts}, 
      author={Vishal Kumar and Zeyi Liao and Jaylen Jones and Huan Sun},
      year={2024},
      eprint={2410.22143},
      archivePrefix={arXiv},
      primaryClass={cs.CL},
      url={https://arxiv.org/abs/2410.22143}, 
}

@misc{paulus2024advprompterfastadaptiveadversarial,
      title={AdvPrompter: Fast Adaptive Adversarial Prompting for LLMs}, 
      author={Anselm Paulus and Arman Zharmagambetov and Chuan Guo and Brandon Amos and Yuandong Tian},
      year={2024},
      eprint={2404.16873},
      archivePrefix={arXiv},
      primaryClass={cs.CR},
      url={https://arxiv.org/abs/2404.16873}, 
}

@misc{shi2024largelanguagemodelsafety,
      title={Large Language Model Safety: A Holistic Survey}, 
      author={Dan Shi and Tianhao Shen and Yufei Huang and Zhigen Li and Yongqi Leng and Renren Jin and Chuang Liu and Xinwei Wu and Zishan Guo and Linhao Yu and Ling Shi and Bojian Jiang and Deyi Xiong},
      year={2024},
      eprint={2412.17686},
      archivePrefix={arXiv},
      primaryClass={cs.AI},
      url={https://arxiv.org/abs/2412.17686}, 
}

@inproceedings{schlarmann2023adversarialrobustnessmultimodalfoundation,
  title={On the adversarial robustness of multi-modal foundation models},
  author={Schlarmann, Christian and Hein, Matthias},
  booktitle={Proceedings of the IEEE/CVF International Conference on Computer Vision},
  pages={3677--3685},
  year={2023}
}

@misc{bailey2024imagehijacksadversarialimages,
      title={Image Hijacks: Adversarial Images can Control Generative Models at Runtime}, 
      author={Luke Bailey and Euan Ong and Stuart Russell and Scott Emmons},
      year={2024},
      eprint={2309.00236},
      archivePrefix={arXiv},
      primaryClass={cs.LG},
      url={https://arxiv.org/abs/2309.00236}, 
}

@misc{madry2019deeplearningmodelsresistant,
      title={Towards Deep Learning Models Resistant to Adversarial Attacks}, 
      author={Aleksander Madry and Aleksandar Makelov and Ludwig Schmidt and Dimitris Tsipras and Adrian Vladu},
      year={2019},
      eprint={1706.06083},
      archivePrefix={arXiv},
      primaryClass={stat.ML},
      url={https://arxiv.org/abs/1706.06083}, 
}

@misc{luo2024imageworth1000lies,
      title={An Image Is Worth 1000 Lies: Adversarial Transferability across Prompts on Vision-Language Models}, 
      author={Haochen Luo and Jindong Gu and Fengyuan Liu and Philip Torr},
      year={2024},
      eprint={2403.09766},
      archivePrefix={arXiv},
      primaryClass={cs.CV},
      url={https://arxiv.org/abs/2403.09766}, 
}

@inproceedings{shumailov2021spongeexamplesenergylatencyattacks,
  title={Sponge examples: Energy-latency attacks on neural networks},
  author={Shumailov, Ilia and Zhao, Yiren and Bates, Daniel and Papernot, Nicolas and Mullins, Robert and Anderson, Ross},
  booktitle={2021 IEEE European symposium on security and privacy (EuroS\&P)},
  pages={212--231},
  year={2021},
  organization={IEEE}
}

@inproceedings{chen2022nicgslowdownevaluatingefficiencyrobustness,
  title={Nicgslowdown: Evaluating the efficiency robustness of neural image caption generation models},
  author={Chen, Simin and Song, Zihe and Haque, Mirazul and Liu, Cong and Yang, Wei},
  booktitle={Proceedings of the IEEE/CVF Conference on Computer Vision and Pattern Recognition},
  pages={15365--15374},
  year={2022}
}

@inproceedings{wang2024whiteboxmultimodaljailbreakslarge,
  title={White-box multimodal jailbreaks against large vision-language models},
  author={Wang, Ruofan and Ma, Xingjun and Zhou, Hanxu and Ji, Chuanjun and Ye, Guangnan and Jiang, Yu-Gang},
  booktitle={Proceedings of the 32nd ACM International Conference on Multimedia},
  pages={6920--6928},
  year={2024}
}

@inproceedings{li2025imagesachillesheelalignment,
  title={Images are achilles’ heel of alignment: Exploiting visual vulnerabilities for jailbreaking multimodal large language models},
  author={Li, Yifan and Guo, Hangyu and Zhou, Kun and Zhao, Wayne Xin and Wen, Ji-Rong},
  booktitle={European Conference on Computer Vision},
  pages={174--189},
  year={2024},
  organization={Springer}
}

@misc{ying2024jailbreakvisionlanguagemodels,
      title={Jailbreak Vision Language Models via Bi-Modal Adversarial Prompt}, 
      author={Zonghao Ying and Aishan Liu and Tianyuan Zhang and Zhengmin Yu and Siyuan Liang and Xianglong Liu and Dacheng Tao},
      year={2024},
      eprint={2406.04031},
      archivePrefix={arXiv},
      primaryClass={cs.CV},
      url={https://arxiv.org/abs/2406.04031}, 
}

@inproceedings{radford2021learningtransferablevisualmodels,
  title={Learning transferable visual models from natural language supervision},
  author={Radford, Alec and Kim, Jong Wook and Hallacy, Chris and Ramesh, Aditya and Goh, Gabriel and Agarwal, Sandhini and Sastry, Girish and Askell, Amanda and Mishkin, Pamela and Clark, Jack and others},
  booktitle={International conference on machine learning},
  pages={8748--8763},
  year={2021},
  organization={PmLR}
}

@inproceedings{li2022blipbootstrappinglanguageimagepretraining,
  title={Blip: Bootstrapping language-image pre-training for unified vision-language understanding and generation},
  author={Li, Junnan and Li, Dongxu and Xiong, Caiming and Hoi, Steven},
  booktitle={International conference on machine learning},
  pages={12888--12900},
  year={2022},
  organization={PMLR}
}

@article{zhao2023evaluatingadversarialrobustnesslarge,
  title={On evaluating adversarial robustness of large vision-language models},
  author={Zhao, Yunqing and Pang, Tianyu and Du, Chao and Yang, Xiao and Li, Chongxuan and Cheung, Ngai-Man Man and Lin, Min},
  journal={Advances in Neural Information Processing Systems},
  volume={36},
  pages={54111--54138},
  year={2023}
}

@misc{dong2023robustgooglesbardadversarial,
      title={How Robust is Google's Bard to Adversarial Image Attacks?}, 
      author={Yinpeng Dong and Huanran Chen and Jiawei Chen and Zhengwei Fang and Xiao Yang and Yichi Zhang and Yu Tian and Hang Su and Jun Zhu},
      year={2023},
      eprint={2309.11751},
      archivePrefix={arXiv},
      primaryClass={cs.CV},
      url={https://arxiv.org/abs/2309.11751}, 
}

@misc{niu2024jailbreakingattackmultimodallarge,
      title={Jailbreaking Attack against Multimodal Large Language Model}, 
      author={Zhenxing Niu and Haodong Ren and Xinbo Gao and Gang Hua and Rong Jin},
      year={2024},
      eprint={2402.02309},
      archivePrefix={arXiv},
      primaryClass={cs.LG},
      url={https://arxiv.org/abs/2402.02309}, 
}

@misc{shayegani2023jailbreakpiecescompositionaladversarial,
      title={Jailbreak in pieces: Compositional Adversarial Attacks on Multi-Modal Language Models}, 
      author={Erfan Shayegani and Yue Dong and Nael Abu-Ghazaleh},
      year={2023},
      eprint={2307.14539},
      archivePrefix={arXiv},
      primaryClass={cs.CR},
      url={https://arxiv.org/abs/2307.14539}, 
}

@inproceedings{gong2025figstepjailbreakinglargevisionlanguage,
  title={Figstep: Jailbreaking large vision-language models via typographic visual prompts},
  author={Gong, Yichen and Ran, Delong and Liu, Jinyuan and Wang, Conglei and Cong, Tianshuo and Wang, Anyu and Duan, Sisi and Wang, Xiaoyun},
  booktitle={Proceedings of the AAAI Conference on Artificial Intelligence},
  volume={39},
  pages={23951--23959},
  year={2025}
}

@misc{qraitem2025visionllmsfoolselfgeneratedtypographic,
      title={Vision-LLMs Can Fool Themselves with Self-Generated Typographic Attacks}, 
      author={Maan Qraitem and Nazia Tasnim and Piotr Teterwak and Kate Saenko and Bryan A. Plummer},
      year={2025},
      eprint={2402.00626},
      archivePrefix={arXiv},
      primaryClass={cs.CV},
      url={https://arxiv.org/abs/2402.00626}, 
}

@misc{wang2024jailbreaklargevisionlanguagemodels,
      title={Jailbreak Large Vision-Language Models Through Multi-Modal Linkage}, 
      author={Yu Wang and Xiaofei Zhou and Yichen Wang and Geyuan Zhang and Tianxing He},
      year={2024},
      eprint={2412.00473},
      archivePrefix={arXiv},
      primaryClass={cs.CV},
      url={https://arxiv.org/abs/2412.00473}, 
}

@misc{teng2025heuristicinducedmultimodalriskdistribution,
      title={Heuristic-Induced Multimodal Risk Distribution Jailbreak Attack for Multimodal Large Language Models}, 
      author={Ma Teng and Jia Xiaojun and Duan Ranjie and Li Xinfeng and Huang Yihao and Chu Zhixuan and Liu Yang and Ren Wenqi},
      year={2025},
      eprint={2412.05934},
      archivePrefix={arXiv},
      primaryClass={cs.CR},
      url={https://arxiv.org/abs/2412.05934}, 
}

@misc{ma2024visualroleplayuniversaljailbreakattack,
      title={Visual-RolePlay: Universal Jailbreak Attack on MultiModal Large Language Models via Role-playing Image Character}, 
      author={Siyuan Ma and Weidi Luo and Yu Wang and Xiaogeng Liu},
      year={2024},
      eprint={2405.20773},
      archivePrefix={arXiv},
      primaryClass={cs.CR},
      url={https://arxiv.org/abs/2405.20773}, 
}

@misc{zou2024imagetotextlogicjailbreakimagination,
      title={Image-to-Text Logic Jailbreak: Your Imagination can Help You Do Anything}, 
      author={Xiaotian Zou and Ke Li and Yongkang Chen},
      year={2024},
      eprint={2407.02534},
      archivePrefix={arXiv},
      primaryClass={cs.CR},
      url={https://arxiv.org/abs/2407.02534}, 
}

@misc{ye2025surveysafetylargevisionlanguage,
      title={A Survey of Safety on Large Vision-Language Models: Attacks, Defenses and Evaluations}, 
      author={Mang Ye and Xuankun Rong and Wenke Huang and Bo Du and Nenghai Yu and Dacheng Tao},
      year={2025},
      eprint={2502.14881},
      archivePrefix={arXiv},
      primaryClass={cs.CR},
      url={https://arxiv.org/abs/2502.14881}, 
}

@inproceedings{du2021physicaladversarialattacksaerial,
  title={Physical adversarial attacks on an aerial imagery object detector},
  author={Du, Andrew and Chen, Bo and Chin, Tat-Jun and Law, Yee Wei and Sasdelli, Michele and Rajasegaran, Ramesh and Campbell, Dillon},
  booktitle={Proceedings of the IEEE/CVF Winter Conference on Applications of Computer Vision},
  pages={1796--1806},
  year={2022}
}

@misc{goodfellow2015explainingharnessingadversarialexamples,
      title={Explaining and Harnessing Adversarial Examples}, 
      author={Ian J. Goodfellow and Jonathon Shlens and Christian Szegedy},
      year={2015},
      eprint={1412.6572},
      archivePrefix={arXiv},
      primaryClass={stat.ML},
      url={https://arxiv.org/abs/1412.6572}, 
}

@inproceedings{athalye2018synthesizingrobustadversarialexamples,
  title={Synthesizing robust adversarial examples},
  author={Athalye, Anish and Engstrom, Logan and Ilyas, Andrew and Kwok, Kevin},
  booktitle={International conference on machine learning},
  pages={284--293},
  year={2018},
  organization={PMLR}
}

@inproceedings{xu2020adversarialtshirtevadingperson,
  title={Adversarial t-shirt! evading person detectors in a physical world},
  author={Xu, Kaidi and Zhang, Gaoyuan and Liu, Sijia and Fan, Quanfu and Sun, Mengshu and Chen, Hongge and Chen, Pin-Yu and Wang, Yanzhi and Lin, Xue},
  booktitle={Computer vision--ECCV 2020: 16th European conference, glasgow, UK, August 23--28, 2020, proceedings, part v 16},
  pages={665--681},
  year={2020},
  organization={Springer}
}

@inproceedings{huang-etal-2025-stronger,
    title = "Stronger Universal and Transferable Attacks by Suppressing Refusals",
    author = "Huang, David  and
      Shah, Avidan  and
      Araujo, Alexandre  and
      Wagner, David  and
      Sitawarin, Chawin",
    editor = "Chiruzzo, Luis  and
      Ritter, Alan  and
      Wang, Lu",
    booktitle = "Proceedings of the 2025 Conference of the Nations of the Americas Chapter of the Association for Computational Linguistics: Human Language Technologies (Volume 1: Long Papers)",
    month = apr,
    year = "2025",
    address = "Albuquerque, New Mexico",
    publisher = "Association for Computational Linguistics",
    url = "https://aclanthology.org/2025.naacl-long.302/",
    doi = "10.18653/v1/2025.naacl-long.302",
    pages = "5850--5876",
    ISBN = "979-8-89176-189-6",
    abstract = "Making large language models (LLMs) safe for mass deployment is a complex and ongoing challenge. Efforts have focused on aligning models to human preferences (RLHF), essentially embedding a ``safety feature'' into the model{'}s parameters. The Greedy Coordinate Gradient (GCG) algorithm (Zou et al., 2023b) emerges as one of the most popular automated jailbreaks, an attack that circumvents this safety training. So far, it is believed that such optimization-based attacks (unlike hand-crafted ones) are sample-specific. To make them universal and transferable, one has to incorporate multiple samples and models into the objective function. Contrary to this belief, we find that the adversarial prompts discovered by such optimizers are inherently prompt-universal and transferable, even when optimized on a single model and a single harmful request. To further exploit this phenomenon, we introduce IRIS, a new objective to these optimizers to explicitly deactivate the safety feature to create an even stronger universal and transferable attack. Without requiring a large number of queries or accessing output token probabilities, our universal and transferable attack achieves a 25{\%} success rate against the state-of-the-art Circuit Breaker defense (Zou et al., 2024), compared to 2.5{\%} by white-box GCG. Crucially, IRIS also attains state-of-the-art transfer rates on frontier models: GPT-3.5-Turbo (90{\%}), GPT-4o-mini (86{\%}), GPT-4o (76{\%}), o1-mini (54{\%}), o1-preview (48{\%}), o3-mini (66{\%}), and deepseek-reasoner (90{\%})."
}

@misc{sitawarin2024palproxyguidedblackboxattack,
      title={PAL: Proxy-Guided Black-Box Attack on Large Language Models}, 
      author={Chawin Sitawarin and Norman Mu and David Wagner and Alexandre Araujo},
      year={2024},
      eprint={2402.09674},
      archivePrefix={arXiv},
      primaryClass={cs.CL},
      url={https://arxiv.org/abs/2402.09674}, 
}

@misc{wang2025julijailbreaklargelanguage,
      title={JULI: Jailbreak Large Language Models by Self-Introspection}, 
      author={Jesson Wang and Zhanhao Hu and David Wagner},
      year={2025},
      eprint={2505.11790},
      archivePrefix={arXiv},
      primaryClass={cs.LG},
      url={https://arxiv.org/abs/2505.11790}, 
}

@misc{inan2023llamaguardllmbasedinputoutput,
      title={Llama Guard: LLM-based Input-Output Safeguard for Human-AI Conversations}, 
      author={Hakan Inan and Kartikeya Upasani and Jianfeng Chi and Rashi Rungta and Krithika Iyer and Yuning Mao and Michael Tontchev and Qing Hu and Brian Fuller and Davide Testuggine and Madian Khabsa},
      year={2023},
      eprint={2312.06674},
      archivePrefix={arXiv},
      primaryClass={cs.CL},
      url={https://arxiv.org/abs/2312.06674}, 
}

@article{shi2025promptarmor,
  title={PromptArmor: Simple yet Effective Prompt Injection Defenses},
  author={Shi, Tianneng and Zhu, Kaijie and Wang, Zhun and Jia, Yuqi and Cai, Will and Liang, Weida and Wang, Haonan and Alzahrani, Hend and Lu, Joshua and Kawaguchi, Kenji and others},
  journal={arXiv preprint arXiv:2507.15219},
  year={2025}
}

@inproceedings{NEURIPS2024_9aa51796,
 author = {Ko, Myeongseob and Li, Henry and Wang, Zhun and Patsenker, Jonathan and Wang, Jiachen T. and Li, Qinbin and Jin, Ming and Song, Dawn and Jia, Ruoxi},
 booktitle = {Advances in Neural Information Processing Systems},
 editor = {A. Globerson and L. Mackey and D. Belgrave and A. Fan and U. Paquet and J. Tomczak and C. Zhang},
 pages = {85131--85154},
 publisher = {Curran Associates, Inc.},
 title = {Boosting Alignment for Post-Unlearning Text-to-Image Generative Models},
 url = {https://proceedings.neurips.cc/paper_files/paper/2024/file/9aa51796f8bede2ea947d6b6e3087ab8-Paper-Conference.pdf},
 volume = {37},
 year = {2024}
}

@inproceedings{arora2019implicit,
 author = {Arora, Sanjeev and Cohen, Nadav and Hu, Wei and Luo, Yuping},
 booktitle = {Advances in Neural Information Processing Systems},
 editor = {H. Wallach and H. Larochelle and A. Beygelzimer and F. d\textquotesingle Alch\'{e}-Buc and E. Fox and R. Garnett},
 pages = {},
 publisher = {Curran Associates, Inc.},
 title = {Implicit Regularization in Deep Matrix Factorization},
 url = {https://proceedings.neurips.cc/paper_files/paper/2019/file/c0c783b5fc0d7d808f1d14a6e9c8280d-Paper.pdf},
 volume = {32},
 year = {2019}
}

@inproceedings{simon2018algorithmic,
 author = {Du, Simon S and Hu, Wei and Lee, Jason D},
 booktitle = {Advances in Neural Information Processing Systems},
 editor = {S. Bengio and H. Wallach and H. Larochelle and K. Grauman and N. Cesa-Bianchi and R. Garnett},
 pages = {},
 publisher = {Curran Associates, Inc.},
 title = {Algorithmic Regularization in Learning Deep Homogeneous Models: Layers are Automatically Balanced},
 url = {https://proceedings.neurips.cc/paper_files/paper/2018/file/fe131d7f5a6b38b23cc967316c13dae2-Paper.pdf},
 volume = {31},
 year = {2018}
}

@book{munkres1984elements,
  title={Elements of Algebraic Topology},
  author={Munkres, James R.},
  year={1984},
  publisher={Addison-Wesley},
  address={Menlo Park, CA}
}

@book{hatcher2002algebraic,
  title={Algebraic Topology},
  author={Hatcher, Allen},
  year={2002},
  publisher={Cambridge University Press},
  address={Cambridge},
  note={Available at \url{https://pi.math.cornell.edu/~hatcher/AT/AT.pdf}}
}

\newpage
\appendix

\section{\texorpdfstring{Omitted Proof in \Cref{section:theory} and \Cref{section:safety}}{Omitted Proof in Section}}
\prelnseq*
\begin{proof}
    The proof is very similar to that of \Cref{thm:preln}. Let
    \begin{align*}
        M=\sup_{a\in\oplus_{i\in[n]}\R^d}\|f(\LN(a))\|<\infty.
    \end{align*}
    The finiteness of $M$ follows because each $\LN(a_i)$ has 2-norm at most $\sqrt{d}\|\gamma\|+\|\beta\|$, so $\LN(a)$ lies in a compact ball in $\oplus_{i\in[n]}\R^d$ on which $f$ is bounded.
    For any specific output sequence $b\in\oplus_{i\in[n]}\R^d$ for which we want to find a corresponding input, construct $R=M+\|b\|+1$. Then applying \Cref{thm:brouwer} to the function $F(a)=b-f(\LN(a))$ restricted to $B^{nd}(R)$ proves the existence of a corresponding input. Hence $g$ is surjective.
\end{proof}
Like \Cref{thm:preln}, this proof can also be extended to other normalizations that are continuous and have bounded output.

\label{sec:odd-proper}
\begin{lemma}\label{lem:proper-odd-surj}
    Let $f:\R^d\to\R^d$ be continuous, proper, and satisfy $f(-x)=-f(x)$. Then $f$ is surjective.
\end{lemma}
\begin{proof}
    Fix a target output $y\in\R^d$. By properness, there exists $R>0$ such that
    \begin{align*}
        \|f(x)\|>\|y\|,\quad\text{for all }\|x\|=R.
    \end{align*}
    Let $\Omega=\{x\in\R^d\mid \|x\|<R\}$ and consider the homotopy
    \begin{align*}
        F(x,t)=f(x)-ty,\quad t\in[0,1].
    \end{align*}
    For every $x\in\partial\Omega$ and every $t\in[0,1]$, $F(x,t)\neq0$. Hence \Cref{lem:degree} gives
    \begin{align*}
        \deg(f-y,\Omega,0)=\deg(f,\Omega,0).
    \end{align*}
    It remains to show that $\deg(f,\Omega,0)\neq0$. Define the normalized boundary map
    \begin{align*}
        \psi_R:S^{d-1}\to S^{d-1},\quad \psi_R(u)= \frac{f(Ru)}{\|f(Ru)\|}.
    \end{align*}
    This is well-defined because $0\notin f(\partial\Omega)$. We denote by $\deg_S(\psi_R)$ the degree of this map between oriented spheres, in the sense of \cite[Section~2.2]{hatcher2002algebraic}. By the boundary definition of Brouwer degree and the normalization property of the boundary map \cite[Definition~1.2.5 and Proposition~1.2.6]{DincaMawhin2021}, applied to the ball $\Omega$, we have
    \begin{align*}
        \deg(f,\Omega,0)=\deg_S(\psi_R);
    \end{align*}
    here the identification $S^{d-1}\ni u\mapsto Ru\in\partial\Omega$ preserves the standard boundary orientation.
    Since $f(-x)=-f(x)$, the map $\psi_R$ satisfies $\psi_R(-u)=-\psi_R(u)$. Hence, by \cite[Proposition~2B.6]{hatcher2002algebraic}, $\deg_S(\psi_R)$ is an odd integer. Therefore $\deg(f,\Omega,0)\neq0$.

    It follows that $\deg(f-y,\Omega,0)\neq0$, and the defining property of Brouwer degree implies that there exists $x\in\Omega$ such that $f(x)=y$. Since $y$ was arbitrary, $f$ is surjective.
\end{proof}

\begin{lemma}\label{lem:ret-proper-surj-app}
    Let $f:\R^d\to\R^d$ be defined by $f(x)=Mx+(x^\top Nx)x$. If $f$ is proper, then $f$ is surjective.
\end{lemma}
\begin{proof}
    The function $f$ is continuous and satisfies $f(-x)=-f(x)$. The claim follows from \Cref{lem:proper-odd-surj}.
\end{proof}

\retalmostproper*
\begin{proof}
    Write $x=ru$ with $r=\|x\|$ and $u\in S^{d-1}$, and let $q(u)=u^\top Nu$. Then
    \begin{align*}
        f(ru)=rMu+r^3q(u)u.
    \end{align*}
    If $|q(u)|$ is bounded away from zero, then the cubic term dominates the linear term as $r\to\infty$. Therefore, a failure of properness can only occur along directions approaching set
    \begin{align*}
        Z_N=\{u\in S^{d-1}|u^\top Nu=0\}.
    \end{align*}
    Consider a sequence $r_k u_k$ with $r_k\to\infty$ and $u_k\to u\in Z_N$. Since the term $r_k^3q(u_k)u_k$ is in the radial direction $u_k$, it cannot cancel the component of $r_kMu_k$ orthogonal to $u_k$. Hence, if $\|f(r_k u_k)\|$ stays bounded, we must have
    \begin{align*}
        Mu\in\mathrm{span}\{u\}.
    \end{align*}
    Thus a necessary condition for non-properness is
    \begin{align}\label{eq:proper-condition}
        Z_N\cap \{u\in S^{d-1}|Mu\in\mathrm{span}\{u\}\}\neq\varnothing.
    \end{align}
    Equivalently, if the intersection in \Cref{eq:proper-condition} is empty, compactness of $Z_N$ gives a uniform lower bound on the orthogonal component of $Mu$ near $Z_N$, while away from $Z_N$ the cubic term dominates. In this case, $f$ is proper.

    The condition $Mu\in\mathrm{span}\{u\}$ imposes $d-1$ linear constraints for each fixed $u$. Since $Z_N$ is at most $(d-2)$-dimensional for almost all $N$, the union of these exceptional choices of $M$ has measure zero in the full-rank matrix space. The same dimension count applies on the generic rank-$(d-1)$ stratum of the low-rank matrices. The lower-rank cases form a smaller zero measure subset inside the set of low-rank matrices. Thus, for almost all choices of full-rank $N$ and such $M$, the intersection in \Cref{eq:proper-condition} is empty, and therefore $f$ is proper.
\end{proof}

\notsur*
\begin{proof}
    It suffices to consider sequences of length two. We exhibit a positive-measure set of parameters for which a positive-measure set of output pairs cannot be reached.
    Let $A=K^\top Q$, and consider parameters for which $V$ is invertible and the symmetric part $(A+A^\top)/2$ is negative definite. This set has positive measure because it contains the product of an open neighborhood of $(K,Q)=(I^d,-I^d)$ with the open set of invertible $V$.

    Fix such parameters. There exists $c>0$ such that
    \begin{align*}
        u^\top A u\leq -c\|u\|^2,\quad\text{for all }u\in\R^d.
    \end{align*}
    Fix $B>0$. Suppose an output pair $(b_1,b_2)$ with $\|b_1\|\leq B$ has a pre-image $(a_1,a_2)$. Since the first attention output is $b_1=Va_1$, we have
    \begin{align*}
        \|a_1\|\leq \|V^{-1}\|B.
    \end{align*}
    For the second output,
    \begin{align*}
        b_2
        =
        \frac{\exp(a_1^\top Aa_2)Va_1+\exp(a_2^\top Aa_2)Va_2}
        {\exp(a_1^\top Aa_2)+\exp(a_2^\top Aa_2)}.
    \end{align*}
    Write $r=a_1^\top Aa_2$, $q=a_2^\top Aa_2$, and
    \begin{align*}
        \lambda=\frac{\exp(q)}{\exp(r)+\exp(q)}.
    \end{align*}
    Then $b_2=(1-\lambda)b_1+\lambda Va_2$. Since
    \begin{align*}
        q\leq -c\|a_2\|^2,\qquad |r|\leq \|A\|\|V^{-1}\|B\|a_2\|,
    \end{align*}
    we have
    \begin{align*}
        \lambda\|Va_2\|
        \leq
        \|V\|\|a_2\|\exp\Sp{-c\|a_2\|^2+\|A\|\|V^{-1}\|B\|a_2\|}.
    \end{align*}
    The right-hand side is bounded uniformly over $a_2$; call this bound $M_B$. Therefore every reachable output pair with $\|b_1\|\leq B$ must satisfy
    \begin{align*}
        \|b_2\|\leq B+M_B.
    \end{align*}
    Hence the positive-measure set
    \begin{align*}
        \Bp{(b_1,b_2)\in\R^d\oplus\R^d\middle|\|b_1\|<B,\ B+M_B+1<\|b_2\|<B+M_B+2}
    \end{align*}
    is disjoint from the image of the length-two attention map. For longer sequences, taking the Cartesian product with any positive-measure set of remaining outputs gives the same obstruction. Thus attention is not surjective, even up to a zero-measure exceptional subset of outputs, on a positive-measure set of parameters. Therefore $\Attn$ is not almost always surjective.
\end{proof}

\rob*
\begin{proof}
    The proof is by induction and resembles the proof of \Cref{thm:ret}. Output $b_1$ only depends on $a_1$. By \Cref{cor:transformer} we know that $b_1=\mathrm{Rob}(a_1)=\TF(a_1)$ is surjective. When constructing $a_j$, we assume that all $a_1,\cdots,a_{j-1}$ have already been determined by $b_1,\cdots,b_{j-1}$. In this way, $b_j$ only depends on $a_j$, and the dependence is through a function that is a composition of functions with Pre-LayerNorms. By \Cref{thm:preln} we know that this function is surjective. In conclusion, $\mathrm{Rob}$ is surjective.
\end{proof}

\section{Additional Results}

\subsection{Negative Results on Surjectivity}

\begin{theorem}\label{thm:relu-mlp-not-surj}
    Let $d_1>d$ and consider the class of two-layer ReLU MLPs $f:\R^d\to\R^d$ of the form
    \begin{align*}
        f(x)=W_2\mathrm{ReLU}(W_1x+\lambda_1)+\lambda_2,
    \end{align*}
    where $W_1\in\R^{d_1\times d}$, $W_2\in\R^{d\times d_1}$, $\lambda_1\in\R^{d_1}$, and $\lambda_2\in\R^d$. This class is not almost always surjective.
\end{theorem}
\begin{proof}
    We exhibit a positive measure set of parameters for which $f$ is not surjective. Let
    \begin{align*}
        \Omega=\left\{(W_1,W_2,\lambda_1,\lambda_2)\mid (W_2)_{1j}>0\text{ for every }j\in[d_1]\right\}.
    \end{align*}
    This set has positive Lebesgue measure in the parameter space. For any parameters in $\Omega$ and any $x\in\R^d$, every coordinate of $\mathrm{ReLU}(W_1x+\lambda_1)$ is nonnegative. Therefore the first coordinate of the output satisfies
    \begin{align*}
        f(x)_1
        =
        \sum_{j=1}^{d_1}(W_2)_{1j}\mathrm{ReLU}(W_1x+\lambda_1)_j+(\lambda_2)_1
        \geq
        (\lambda_2)_1.
    \end{align*}
    Hence no vector $y\in\R^d$ with $y_1<(\lambda_2)_1$ belongs to the image of $f$. Thus every network with parameters in $\Omega$ is not surjective. Since $\Omega$ has positive measure, the non-surjective parameter set is not measure zero, and the class is not almost always surjective.
\end{proof}

We can prove a similar negative result for MLPs with GeLU activation.
\begin{theorem}\label{thm:gelu-mlp-not-surj}
    Let $d_1>d$ and consider the class of two-layer GeLU MLPs $f:\R^d\to\R^d$ of the form
    \begin{align*}
        f(x)=W_2\mathrm{GeLU}(W_1x+\lambda_1)+\lambda_2,
    \end{align*}
    where $W_1\in\R^{d_1\times d}$, $W_2\in\R^{d\times d_1}$, $\lambda_1\in\R^{d_1}$, and $\lambda_2\in\R^d$. This class is not almost always surjective.
\end{theorem}
\begin{proof}
    Let
    \begin{align*}
        m_{\mathrm{GeLU}}=\inf_{t\in\R}\mathrm{GeLU}(t).
    \end{align*}
    This quantity is finite because scalar GeLU is continuous, tends to $0$ as $t\to-\infty$, and tends to $\infty$ as $t\to\infty$.

    As in the ReLU case, consider the positive measure set of parameters
    \begin{align*}
        \Omega=\left\{(W_1,W_2,\lambda_1,\lambda_2)\mid (W_2)_{1j}>0\text{ for every }j\in[d_1]\right\}.
    \end{align*}
    For any parameters in $\Omega$ and any $x\in\R^d$, each coordinate of $\mathrm{GeLU}(W_1x+\lambda_1)$ is at least $m_{\mathrm{GeLU}}$. Therefore
    \begin{align*}
        f(x)_1
        =
        \sum_{j=1}^{d_1}(W_2)_{1j}\mathrm{GeLU}(W_1x+\lambda_1)_j+(\lambda_2)_1
        \geq
        m_{\mathrm{GeLU}}\sum_{j=1}^{d_1}(W_2)_{1j}+(\lambda_2)_1.
    \end{align*}
    Thus the first coordinate of $f(x)$ is uniformly lower bounded over all $x\in\R^d$. Any vector $y\in\R^d$ with
    \begin{align*}
        y_1<m_{\mathrm{GeLU}}\sum_{j=1}^{d_1}(W_2)_{1j}+(\lambda_2)_1
    \end{align*}
    is not in the image of $f$. Hence every network with parameters in $\Omega$ is not surjective. Since $\Omega$ has positive measure, the class is not almost always surjective.
\end{proof}

\begin{theorem}\label{thm:autoregressive-transformer-not-surj}
    Let $\mathcal{T}_\theta$ be any finite composition of $\mathrm{TF}$. For any prompt length $n\in\N^+$, define the two-step autoregressive map
    \begin{align*}
        \mathrm{AR}_{\theta,n}:\oplus_{i\in[n]}\R^d\to\R^d\oplus\R^d
    \end{align*}
    as follows. Given input sequence $a=(a_1,\cdots,a_n)$, let
    \begin{align*}
        b_1=\mathcal{T}_\theta(a_1,\cdots,a_n)_n,\qquad
        b_2=\mathcal{T}_\theta(a_1,\cdots,a_n,b_1)_{n+1},
    \end{align*}
    and set $\mathrm{AR}_{\theta,n}(a)=(b_1,b_2)$. The class of maps $\mathrm{AR}_{\theta,n}$ is not almost always surjective. In fact, no such $\mathrm{AR}_{\theta,n}$ is surjective.
\end{theorem}
\begin{proof}
    The central observation is that the residual connection passes the input at each position to the output at the same position, while the non-residual terms are evaluated after LayerNorm and are therefore bounded. Recall from \Cref{section:prelim} that one Transformer block takes the form
    \begin{align*}
        \TF(x)_i
        =
        x_i+\Attn(\LN(x))_i
        +
        W_2\mathrm{GeLU}\left(W_1\LN\left(x_i+\Attn(\LN(x))_i\right)+\lambda_1\right)+\lambda_2.
    \end{align*}
    Hence the only part of $\TF(x)_i$ that can grow directly with $x_i$ is the residual term $x_i$.

    We now bound the two remaining terms. Since LayerNorm has bounded image, there exists $B_{\LN}<\infty$ such that
    \begin{align*}
        \|\LN(z)\|\leq B_{\LN},\quad\text{for all }z\in\R^d.
    \end{align*}
    For the attention branch,
    \begin{align*}
        \Attn(\LN(x))_i
        =
        \frac{1}{Z_i}\sum_{j=1}^i\exp\left(\LN(x_j)^\top K^\top Q\LN(x_i)\right)V\LN(x_j),
    \end{align*}
    where the coefficients $\exp(\cdot)/Z_i$ are nonnegative and sum to one. Thus $\Attn(\LN(x))_i$ is a convex combination of vectors $V\LN(x_j)$, and
    \begin{align*}
        \|\Attn(\LN(x))_i\|
        \leq
        \|V\|B_{\LN}.
    \end{align*}
    In particular, if
    \begin{align*}
        c_i=x_i+\Attn(\LN(x))_i,
    \end{align*}
    then $c_i-x_i$ is uniformly bounded, independently of the sequence length and the input.

    For the MLP branch, the input is $\LN(c_i)$, which again lies in the bounded image of LayerNorm. Therefore $W_1\LN(c_i)+\lambda_1$ lies in a bounded subset of $\R^{d'}$. Since scalar GeLU is bounded on bounded intervals, the map
    \begin{align*}
        u\mapsto W_2\mathrm{GeLU}(W_1u+\lambda_1)+\lambda_2
    \end{align*}
    is bounded on $\LN(\R^d)$. Hence there exists $B_{\mathrm{MLP}}<\infty$ such that
    \begin{align*}
        \left\|W_2\mathrm{GeLU}(W_1\LN(c_i)+\lambda_1)+\lambda_2\right\|
        \leq
        B_{\mathrm{MLP}}.
    \end{align*}

    Combining the attention and MLP bounds, there exists a constant
    \begin{align*}
        C_{\mathrm{block}}=\|V\|B_{\LN}+B_{\mathrm{MLP}}<\infty
    \end{align*}
    such that for every sequence length $m$, every input sequence $x=(x_1,\cdots,x_m)$, and every position $i\in[m]$,
    \begin{align}\label{eq:bounded-transformer-residual}
        \|\TF(x)_i-x_i\|\leq C_{\mathrm{block}}.
    \end{align}

    Since $\mathcal{T}_\theta$ is a finite composition of such blocks, summing the bounds over blocks gives a constant $C_\theta<\infty$ such that, for every sequence length $m$, every sequence $x$, and every position $i$,
    \begin{align}\label{eq:bounded-full-transformer-residual}
        \|\mathcal{T}_\theta(x)_i-x_i\|\leq C_\theta.
    \end{align}

    Now take any prompt $a=(a_1,\cdots,a_n)$ and let $\mathrm{AR}_{\theta,n}(a)=(b_1,b_2)$. In the second autoregressive step, the input at position $n+1$ is exactly $b_1$. Applying \Cref{eq:bounded-full-transformer-residual} to the sequence $(a_1,\cdots,a_n,b_1)$ at position $n+1$ gives
    \begin{align*}
        \|b_2-b_1\|
        =
        \|\mathcal{T}_\theta(a_1,\cdots,a_n,b_1)_{n+1}-b_1\|
        \leq C_\theta.
    \end{align*}
    Thus the image of $\mathrm{AR}_{\theta,n}$ is contained in
    \begin{align*}
        \left\{(y_1,y_2)\in\R^d\oplus\R^d\mid \|y_2-y_1\|\leq C_\theta\right\}.
    \end{align*}
    This is a proper subset of $\R^d\oplus\R^d$: for example, any pair $(0,y)$ with $\|y\|>C_\theta$ is not in the image. Hence $\mathrm{AR}_{\theta,n}$ is not surjective. Since this holds for every choice of finite parameters $\theta$, the class is not almost always surjective.
\end{proof}

\subsection{More Applications of Degree Theory}
We start with an alternative proof of \Cref{thm:preln} using Brouwer degree.
\begin{proof}[Alternative proof of \Cref{thm:preln}]
    We give another proof of \Cref{thm:preln} using Brouwer degree. Let
    \begin{align*}
        g(x)=x+f(\LN(x)),\quad h(x)=f(\LN(x)).
    \end{align*}
    By the boundedness of the image of $\LN$ and the continuity of $f$, the map $h$ is bounded. Write
    \begin{align*}
        M=\sup_{x\in\R^d}\|h(x)\|<\infty.
    \end{align*}
    Fix an arbitrary target $y\in\R^d$, choose $R>\|y\|+M$, and let $\Omega=\{x\in\R^d\mid \|x\|<R\}$. Consider the homotopy
    \begin{align*}
        g_t(x)=x+t f(\LN(x)),\quad t\in[0,1].
    \end{align*}
    For every $x\in\partial\Omega$ and every $t\in[0,1]$,
    \begin{align*}
        \|g_t(x)\|\geq \|x\|-t\|h(x)\|\geq R-M>\|y\|.
    \end{align*}
    Hence $y\notin g_t(\partial\Omega)$ throughout the homotopy. By \Cref{lem:degree},
    \begin{align*}
        \deg(g,\Omega,y)=\deg(g_0,\Omega,y)=\deg(\mathrm{Id},\Omega,y)=1,
    \end{align*}
    where the last equality follows because $y\in\Omega$ and the identity map has one pre-image of $y$ with positive orientation. Since the degree is nonzero, $g^{-1}(y)$ is nonempty. As $y$ was arbitrary, $g$ is surjective.
\end{proof}

In the following example we discuss MLP with LeakyReLU activation, another widely used activation function. Although the homotopy does not yield a clean surjectivity theorem for general MLPs, it reveals a simple sufficient condition that may be useful elsewhere.

\begin{definition}
    For $\alpha\in(0,1]$, the LeakyReLU activation $\mathrm{LeakyReLU}_\alpha:\R^d\to\R^d$ is defined entrywise by
    \begin{align*}
        \mathrm{LeakyReLU}_\alpha(z)_i=\max\{z_i,\alpha z_i\}.
    \end{align*}
\end{definition}

\begin{theorem}\label{thm:leakyrelu-mlp-sufficient}
    Let $\alpha\in(0,1]$ and let
    \begin{align*}
        f(x)=W_2\mathrm{LeakyReLU}_\alpha(W_1x+\lambda_1)+\lambda_2
    \end{align*}
    be a two-layer MLP from $\R^d$ to $\R^d$, where $W_1\in\R^{d_1\times d}$, $W_2\in\R^{d\times d_1}$, $\lambda_1\in\R^{d_1}$, and $\lambda_2\in\R^d$. If
    \begin{align*}
        \det(W_2DW_1)\neq0
    \end{align*}
    for every diagonal matrix $D=\mathrm{diag}(s_1,\cdots,s_{d_1})$ with $s_i\in[\alpha,1]$, then $f$ is surjective.
\end{theorem}
\begin{proof}
    Fix $v\in\R^d$. Let
    \begin{align*}
        f^*(x)=W_2(W_1x+\lambda_1)+\lambda_2 .
    \end{align*}
    Since the assumption includes $D=I^{d_1}$, the matrix $W_2W_1$ is invertible. Hence $f^*(x)=v$ has a unique solution
    \begin{align*}
        x^*=(W_2W_1)^{-1}(v-\lambda_2-W_2\lambda_1),
    \end{align*}
    and for every open ball $\Omega$ containing $x^*$,
    \begin{align*}
        \deg(f^*,\Omega,v)=\mathrm{sgn}\det(W_2W_1)\neq0.
    \end{align*}

    Define $\eta_t=1-t(1-\alpha)$ and
    \begin{align*}
        F(x,t)=W_2\sigma_t(W_1x+\lambda_1)+\lambda_2,\qquad
        \sigma_t(z)_i=\max\{z_i,\eta_t z_i\}.
    \end{align*}
    Then $F(x,0)=f^*(x)$ and $F(x,1)=f(x)$. By compactness of the diagonal matrices with entries in $[\alpha,1]$,
    \begin{align*}
        m=\min_D\sigma_{\min}(W_2DW_1)>0.
    \end{align*}
    For each $x,t$, there is such a diagonal matrix $D_{x,t}$ satisfying
    \begin{align*}
        \sigma_t(W_1x+\lambda_1)=D_{x,t}(W_1x+\lambda_1).
    \end{align*}
    Therefore
    \begin{align*}
        \|F(x,t)\|
        \geq m\|x\|-\|W_2\|\|\lambda_1\|-\|\lambda_2\|.
    \end{align*}
    Choose
    \begin{align*}
        R>\max\left\{\|x^*\|,\frac{\|v\|+\|W_2\|\|\lambda_1\|+\|\lambda_2\|}{m}\right\}
    \end{align*}
    and set $\Omega=\{x\in\R^d\mid \|x\|<R\}$. For all $x\in\partial\Omega$ and $t\in[0,1]$, we have $\|F(x,t)\|>\|v\|$, so $v\notin F(\partial\Omega,t)$. By homotopy invariance,
    \begin{align*}
        \deg(f,\Omega,v)=\deg(f^*,\Omega,v)\neq0.
    \end{align*}
    Thus $f(x)=v$ has a solution. Since $v$ was arbitrary, $f$ is surjective.
\end{proof}

\section{Experiment}
Our results prove surjectivity of practical architectures by proving surjectivity of their building blocks. Hence our proof also provides algorithms to find an input corresponding to a specific output of the network that we want. For GPT-2, or more generally surjective autoregressive models, we can find the input one token at a time as described in \Cref{alg}.

\begin{algorithm}
\caption{Finding Input Sequence}
\begin{algorithmic}\label{alg}
\REQUIRE A Frozen Transformer $\TF$, An Output Sequence $b$
\ENSURE A Reconstructed Sequence $a$
\STATE $a_1\gets 0$
\STATE Optimize $a_1$ using gradient descent on loss $\Sp{b_1-\TF(a_1)}^2$
\FOR{$i = 2$ to $n$}
    \STATE $a_i\gets 0$
    \STATE Optimize $a_i$ using gradient descent on loss $\Sp{b_i-\TF(a_1,\cdots,a_i)_i}^2$
\ENDFOR
\RETURN $a$
\end{algorithmic}
\end{algorithm}

In this section, we conduct experiments to verify the theoretical statements using GPT-2. In particular we implement~\Cref{alg} to find inputs corresponding to the following outputs:
\begin{itemize}
    \item Twenty sentences from 2025 New York Times articles. GPT-2 could not have been trained on such sentences. One example is ``The United States and China said Monday they reached an agreement ... threatening the world's two largest economies''. The length of these sentences varies from 7 words to 37 words.
    \item Twenty-five sentences that contain completely random words from vocabulary, with length 2, 4, ..., 50. One example is "whims produ ether debunked depressive FoundingeeshedonApplication Weight refin 58".
\end{itemize}

Notice that~\Cref{alg} cannot guarantee that we always find a corresponding $a$ because gradient-based optimization is still a heuristic algorithm. However, decoding one input at a time is a lot simpler than jointly optimizing the whole sequence $a$ at once. If this optimization is still too difficult, we can further decompose the algorithm into finding the hidden embeddings iteratively. However we find that for GPT-2 \Cref{alg} is enough.

For every gradient descent, we set learning rate to be 0.1 and optimize for 200 steps. We use an A100 GPU for inference. For all forty-five sentences we described above, the algorithm succeeds in finding the corresponding input sequences. The decoding speed per token is $10.25\pm 0.14$ seconds.

\end{document}